\def\eqref#1{equation~\ref{#1}}
\def\1{\bm{1}}
\DeclareMathAlphabet{\mathsfit}{\encodingdefault}{\sfdefault}{m}{sl}
\SetMathAlphabet{\mathsfit}{bold}{\encodingdefault}{\sfdefault}{bx}{n}
\newcommand{\ceq}{\stackrel{\mathclap{\normalfont\mbox{c}}}{=}}
\definecolor{LightCyan}{rgb}{0.88,1,1}
\definecolor{LightGray}{rgb}{0.9,0.9,0.9}
\definecolor{Gray}{gray}{0.9}
\newtheorem{proposition}{Proposition}
\journal{Medical Image Analysis}
\title{Do We Really Need Dice? The Hidden Region-Size Biases of Segmentation Losses}
\author[1,2]{Bingyuan Liu\corref{cor1}}
\author[1,2,3]{Jose Dolz}
\author[4]{Adrian Galdran}
\author[5]{Riadh Kobbi}
\author[1,2,3]{Ismail Ben Ayed}
\address[1]{LIVIA, ÉTS Montréal, Canada}
\address[2]{International Laboratory on Learning Systems (ILLS),\\McGill - ETS - MILA - CNRS - Université Paris-Saclay - CentraleSupélec, Canada}
\address[3]{Centre de Recherche du Centre Hospitalier de l'Université de Montréal (CRCHUM), Canada}
\address[4]{Universitat Pompeu Fabra, Spain}
\address[5]{Diagnos Inc., Canada}
\newcommand{\rev}[1]{{\color{black}#1}}
\newcommand{\revA}[1]{{\color{black}#1}}
\begin{document}
\begin{frontmatter}
\begin{abstract}

Most segmentation losses are arguably variants of the Cross-Entropy (CE) or Dice losses. On the surface, these two categories of losses (i.e., distribution based vs. geometry based) seem unrelated, 
and there is no clear consensus as to which category is a better choice, with varying performances for each across different benchmarks and applications. Furthermore, it is widely argued within the 
medical-imaging community that Dice and CE are complementary, which has motivated the use of compound CE-Dice losses.
In this work, we provide a theoretical analysis, which shows that CE and Dice share a much deeper connection than previously thought. First, we show that, from a constrained-optimization perspective, they both decompose into two components, i.e., a similar ground-truth matching term, which pushes the predicted foreground regions towards the ground-truth, and a region-size penalty term imposing different biases on the size (or proportion) of the predicted regions.
Then, we provide bound relationships and an information-theoretic analysis, which uncover hidden region-size biases: Dice has an intrinsic bias towards 
specific extremely imbalanced solutions, whereas CE implicitly encourages the ground-truth region proportions. Our theoretical results explain the wide experimental evidence in the medical-imaging literature, whereby
Dice losses bring improvements for imbalanced segmentation.
It also explains why CE dominates natural-image problems with diverse class proportions, in which case Dice might have difficulty adapting to different region-size distributions.
Based on our theoretical analysis, we propose a principled and simple solution, which enables to control explicitly the region-size bias.
The proposed method integrates CE with explicit terms based on ${\cal L}_1$ or the KL divergence, which encourage segmenting region proportions to match target class proportions, thereby mitigating class imbalance but without losing generality. Comprehensive experiments and ablation studies over different losses and applications validate our theoretical analysis, as well as the effectiveness of explicit and simple region-size terms.
The code is available at \url{https://github.com/by-liu/SegLossBias}.
\end{abstract}

\begin{keyword}
medical image segmentation\sep loss function 
\end{keyword}

\end{frontmatter}

\setlength{\parskip}{3pt}

\section{Introduction}
\label{sec:intro}

Semantic segmentation is one of the most investigated problems in computer vision, and has been impacting a breadth of applications, from natural-scene 
understanding \citep{Cordts2016Cityscapes,Kirillov2019Panoptic,kirillov2023segment} to medical image analysis \citep{Litjens2017survey,DOLZ2018segmri,cheng2023sammed2d}. 
The problem is often stated as pixel-wise classification, following the optimization of a loss function expressed with summations over the 
ground-truth regions, as in the standard Cross-Entropy (CE) loss.
A challenging aspect of segmentation problems is the existence of extremely diverse distributions (or proportions of the segmentation regions) across different datasets, classes and instances.
A representative example is the popular Cityscapes dataset \citep{Cordts2016Cityscapes}, where the average proportions of some classes, such as \textit{motorcycle} or \textit{bicycle}, are below $1\%$, while the proportions of some classes, like \textit{road} and \textit{building}, can be larger than $10\%$.
The class imbalance issue in medical image segmentation can be even more severe.
It usually involves medium-to-large regions like liver or pancreas, and small regions such as tumor \citep{antonelli2021medical}.
In some situations, the examples might have extremely small regions like in the context of retinal lesions \citep{Wei2020RetinalLesion}.
Therefore, segmentation methods should be able to address the extreme class imbalance issue (small-region terms are nearly neglected in the objective), without losing generality to adapt to medium-to-large regions.  
In these scenarios, besides specifically designed deep-network architectures or training schemes \citep{tao2020hierarchical,beit2021BEiT}, the loss function to be minimized during learning plays a 
critical role, and has triggered a large body of research works in the recent years \citep{ma2021loss,KERVADEC2021Boundary,Lin2017FocalLoss,WongMTS18logdice,vnet2016,Carole2017GDice,kervadec2021beyond,Kofler2022Blob}.

\begin{figure}[t]
\centering
\includegraphics[width=0.8\columnwidth]{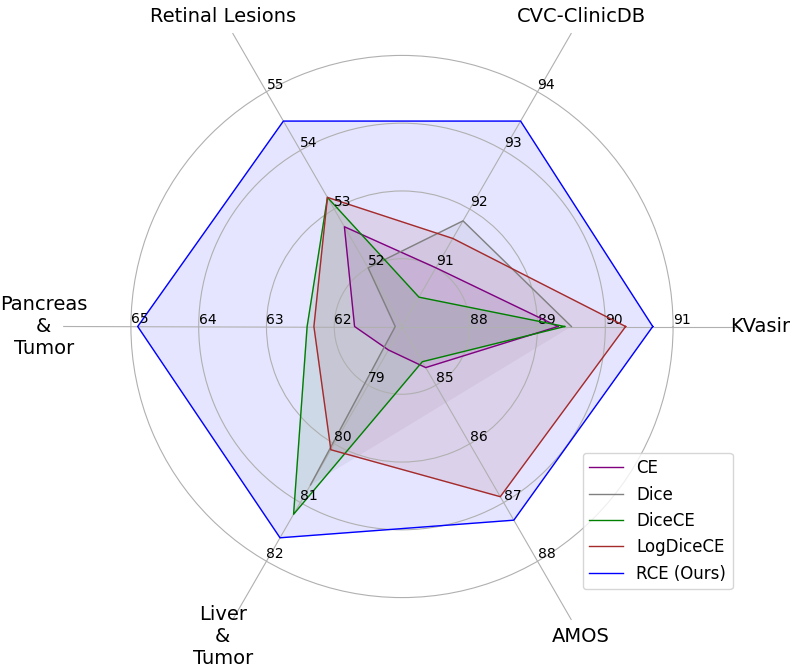}
\caption{Comparison of the proposed loss for semantic segmentation, RCE, with the widely used losses of CE, Dice, and their compound forms (DiceCE and LogDiceCE) \revA{across six medical image segmentation benchmarks}. The DSC($\%$) scores achieved with R50FPN segmentation network on the test set of each database are included.}
\label{fig:radar}
\end{figure}

While there exists a great diversity of loss functions for segmentation, the recent excellent survey in \citep{ma2021loss} pointed to strong connections between 
these losses; see Fig. 1 in \citep{ma2021loss}. Most of the existing segmentation losses are arguably variants of the CE, Dice loss \citep{ma2021loss,yeung2021mixed} or combinations of both \citep{WongMTS18logdice,taghanaki2019combo}, and could be categorized into two main families. The first family is motivated by distribution measures, i.e., CE and its variants, and is directly adapted 
from classification tasks. To deal with class imbalance, various extensions of CE have been investigated, such as increasing the relative weights for minority classes \citep{Ronn2015Unet},
or modifying the loss so as to account for performance indicators during training, as in the popular Focal loss \citep{Lin2017FocalLoss} or TopK loss \citep{WuSH16topk}.
The second main family of losses is inspired by geometrical metrics. In this category, the most popular losses are linear Dice \citep{vnet2016} and its extensions, such
as the logarithmic \citep{WongMTS18logdice} or generalized \citep{Carole2017GDice} Dice loss. Borrowing the idea of the weighted CE, the latter introduces class weights to increase the contributions of 
the minority classes. These loss functions are motivated by the geometric Dice coefficient, which measures the overlap between the ground-truth and predicted segmentation regions.

In the literature, to our knowledge, there is no clear consensus as to which category of losses is better, with the performances of each 
varying across data sets and applications. It has been empirically argued that the Dice loss and its variants are more appropriate for extreme class imbalance, and such empirical 
observations are the main motivation behind the wide use and popularity of Dice in medical-imaging applications \citep{vnet2016,jha2020doubleunet}. 
CE, however, dominates most recent models in the context of natural images \citep{ChenZPSA18deeplab,zhao2017pspnet,zhao2018psanet,Yuan2020OCR}, and also outperforms Dice based losses for some benchmarks or categories in medical imaging \citep{ma2021loss}.
Therefore, beyond experimental evidence, there is a need for a theoretical analysis that clarifies which segmentation loss to adopt for a given task, a decision that may affect performance significantly.

On the surface, these two categories of losses (i.e., distribution based vs. geometry based) seem unrelated.
Moreover, it is widely argued within the medical imaging community that Dice and CE are complementary losses, which has motivated the use of compound CE-Dice losses integrating both \citep{ma2021loss,Fabian2021nnUnet,WongMTS18logdice,taghanaki2019combo}. 
These recent works, among many others, provided an intensive experimental evidence that points to highly competitive performances of compound CE-Dice losses,  
in a variety of class-imbalance scenarios. In particular, the recent comprehensive experimental study in \citep{ma2021loss} corroborated this finding with 
evaluations over more than 20 recent segmentation losses.

In this paper, we provide a constrained-optimization perspective showing that, in fact, CE and Dice share a much deeper connection 
than previously thought: They both decompose into region-size penalties and closely related ground-truth matching penalties.  
Our theoretical analysis highlights encoded hidden region-size biases in Dice and CE, and shows that the main difference between 
the two types of losses lies essentially in those region-size biases: Dice has an intrinsic bias preferring very small regions, while 
CE implicitly encourages the right (ground-truth) region proportions. Our results explain the wide experimental evidence in 
the medical-imaging literature, whereby using or adding Dice losses brings improvements for imbalanced segmentation with extremely small regions.  
It also explains why CE dominates natural-image problems and has edges in some medical imaging applications with diverse class proportions, in which case Dice might have difficulty adapting to 
different region-size distributions (see examples in Fig.~\ref{fig:example}).
Based on our theoretical analysis, we propose principled and simple loss functions, which enable to control explicitly the region-size bias term. 
Our solution integrates the benefits of both categories of losses, mitigating class imbalance but without losing generality, showing competitive and more stable performances on a variety of bechmarks, as shown in Fig.~\ref{fig:radar}.

Our contributions are summarized as follows:

    $\bullet~$ Showing through an explicit bound relationship (Proposition \ref{prop:multi-class-Dice-Bias}) that the Dice loss has a hidden region-size bias towards 
    specific extremely imbalanced solutions, preferring small structures, while losing the flexibility to deal effectively with arbitrary class proportions.
    
    $\bullet~$ Providing an information-theoretic perspective of CE, via Monte-Carlo approximation of the entropy of the learned features (Proposition \ref{prop:ce}). 
    This highlights a hidden region-size bias of CE, which encourages the proportions of the predicted segmentation regions to match the ground-truth proportions.
    
    $\bullet~$ Introducing compound loss functions, which enables to control explicitly class-proportion biases 
    in standard supervised-learning settings: 
    Our losses integrate CE with explicit terms based on ${\cal L}_1$ or the KL 
    divergence, which encourage segmenting regions to match target class proportions.  
    
    $\bullet~$ Comprehensive  experiments and ablation studies over different losses and applications, including 2d and 3d medical-imaging data, validate our theoretical analysis, as well as the effectiveness of simple region-size regularizers.

\revA{
\section{Related work}

\textbf{Semantic Segmentation.} Before the emergence of deep learning, popular models for image segmentation included graph-based approaches \citep{BoykovF06GrapCut, Tang2016MRF} and conventional machine learning techniques, e.g., support vector machines, with hand-designed descriptors \citep{Dong2014Semantic}.
Current state-of-the-art models in both natural image benchmarks and medical domain are based on an encoder-decoder architecture \citep{Badrinarayanan2017SegNet}.
In this type of architecture, the encoder typically leverages some off-the-shelf CNN architecture like Residual Networks \citep{He2016resnet}, or self-attention based transformer \citep{dosovitskiy2020vit} pre-trained on a large-scale supervised dataset \citep{ILSVRC15,Lin14Coco}.
The encoder is then followed by a decoder that semantically projects the discriminative features learned by the encoder onto the pixel space to obtain a dense classification as the predicted mask.
A large body of research in image segmentation has focused on improving the design of the decoder module with techniques like feature pyramid (FPN),  Atrous convolution (DeepLab) \citep{ChenZPSA18deeplab}, attention (PSA)\citep{zhao2018psanet}, transformer \citep{Vaswani2017transformer} and many other alternatives \citep{Zbigniew2018Devil}.
In the medical image applications, the U-Net \citep{Ronn2015Unet} and its variants \citep{vnet2016, galdran2020wnet, hatamizadeh2022unetr} have become the most popular options for image segmentation.
Recently, we also see the efforts to build foundational segmentation models \citep{kervadec2021beyond, cheng2023sammed2d}, with strong generalization ability and zero-shot performances. 

\textbf{Segmentation Losses.} Once a model structure has been selected, the next critical decision to make is the loss function to be minimized.
On the basis of motivations, most loss functions can be mainly categorized into two groups.
The first group is motivated by statistical metric, such as Cross-entropy (CE), which is the most widely used option.
Under the presence of class imbalance, some simple extensions of CE may be preferred, \textit{e.g.}, weighting different classes according to the corresponding inverse class frequencies \citep{Carole2017GDice}.
Some other works use performance indicators to dynamically increase the attention to hard or minority examples during training, such as Focal loss \citep{Lin2017FocalLoss} and TopK loss \citep{WuSH16topk}.

The second family of losses is recognized as geometrical-based functions.
Inspired by Dice coefficient, Dice loss, first applied in \citep{vnet2016}, is a popular alternative for CE, especially in the medical image segmentation community.
It has the advance of directly maximizing the evaluation metric and handling highly imbalance issues due to its sensitivity of few misclassified pixels.
Like weighted CE, generalised Dice loss \citep{Carole2017GDice} imposes the inverse of the area as the class weights.
With the standard Dice loss set as $1-Dice$, \citep{WongMTS18logdice} propose to use the exponential logarithmic form of Dice as an alternative.
Some works try to extend Dice-based losses with particular motivations.
clDice \citep{cldice2021} considering the intersection of the segmentation masks and their morphological skeleta.
Blob loss \citep{Kofler2022Blob} including instance imbalanced awareness to improve the performance of segmenting multiple small instances.
\citep{wang2023dice} adapts the Dice loss for soft labels.
Some recent works \citep{WongMTS18logdice,taghanaki2019combo,Yue2019CardiacSeg} claim the benefit of combining CE and logarithmic Dice, which is also referred as a compound loss.
Besides Dice metric, alternative geometric measures have driven the exploration of other segmentation losses, such as Tversky loss \citep{salehi2017tversky}, Jaccard or Intersection over Union (IoU) \citep{Eelbode2020OptJaccard,duquearias2021JaccardLoss}, and boundary loss \citep{KERVADEC2021Boundary}.
In \citep{kervadec2021beyond}, it demonstrates that comparable performances can be achieved with global geometric shape descriptors only, without the standard pixel-wise cross-entropy loss.

In the context of the large body of segmentation losses proposed in recent years,  there is no consensus on how to chose a good and appropriate loss.
\citep{ma2021loss,yeung2021mixed} comprehensively explored the links and differences between different losses, while \citep{leng2022polyloss} propose a unified framework for common losses by Taylor expansion.
In this paper, we further reveal a non-obvious relationship between both two main families of segmentation losses, \textit{i.e.}, CE and Dice, via an explicit theoretical justification.
}

\section{Formulation}
\label{sec:method}

\begin{table*}[!h]
    \caption{\textbf{Notations, formulations and approximations used in this paper.} ${\cal F}$ and ${\cal K}$ denotes the random variables associated with the learned features and the labels, respectively. ${\mathbb P}$ denotes probability.  $|.|$ denotes cardinality when the input is a set and the standard absolute value when the input is a scalar. Note that network parameters $\theta$ are omitted in the prediction quantities, so as to simplify notations, 
    as this does not lead to ambiguity.}
    \label{table:notations}
    \centering
    \resizebox{1.0\textwidth}{!}{
    \renewcommand{\arraystretch}{1.2}
    \begin{tabular}{@{}ll@{}}
    \multicolumn{2}{c}{Dataset} \\
    \toprule
    Concept & Formula \\
    \midrule
    Indices/number of classes & $1 \leq k \leq K$
    \\
     Spatial image domain & $\mathbf{\Omega}\subset \mathbb{R}^2$ 
    \\
    Labels of pixel $i \in \mathbf{\Omega}$ & $y_{ik} \in \{0, 1\}$ \\
    GT region $k$ & $\mathbf{\Omega}_k = \{i \in \mathbf{\Omega} | y_{ik} = 1\}$ \\
     GT proportion of region $k$ & $\hat{y}_{k} = \frac{|\mathbf{\Omega}_k|}{|\mathbf{\Omega}|}$
    \\
    GT region-size prob. & $\mathbf{y} = \left (\hat{y}_{k} \right )_{1 \leq k \leq K}$ 
    \\
    \bottomrule
    \end{tabular}
    \qquad
    \begin{tabular}{@{}ll@{}}
    \multicolumn{2}{c}{Modeling} \\
    \toprule
    Concept & Formula \\
    \midrule
    Model parameters & $\theta$ 
    \\
    Feature embedding at pixel $i \in \mathbf{\Omega}$
      & $\mathbf{f}_i^{\theta}$ \\
    Softmax predictions at pixel $i \in \mathbf{\Omega}$  & $p_{ik} = \mathbb{P}(k| \mathbf{f}_i^{\theta})$
    \\
    Predicted proportion of class $k$  & $\hat{p}_{k} = \frac{1}{\mathbf{|\Omega|}}\sum_{i \in \mathbf{\Omega}} p_{ik}$ \\
    Predicted region-size prob. & $\mathbf{p} = \left (\hat{p}_{k} \right )_{1 \leq k \leq K}$  \\
    ($K-1$)-simplex & $\Delta_K = \{\mathbf{p} \in [0, 1]^K~/~\sum_k \hat{p}_{k} = 1 \}$ \\
    \bottomrule \\
    \end{tabular}
    }
    \resizebox{0.7\textwidth}{!}{
    \renewcommand{\arraystretch}{1.5}
    \begin{tabular}{@{}lcl@{}}
    \multicolumn{3}{c}{\rule{0pt}{4ex}Losses, region-size regularizers and information-theoretic quantities} \\
    \toprule
    Concept && Formula \\
    \midrule
    Weighted cross-entropy && $ \text{CE} = -\sum_{k=1}^K \frac{1}{|\mathbf{\Omega}_k|}\sum_{i \in \mathbf{\Omega}_k} \log(p_{ik})$ \\
    Dice coefficient for region $k$ && $ \text{Dice}_k = \frac{2 \sum_{i\in \mathbf{\Omega}_k} p_{ik} }{\sum_{i \in \mathbf{\Omega}}{p_{ik}^{}} + |\mathbf{\Omega}_k|} $  \\
    region-size KL divergence && $ {\cal D}_\text{KL}(\mathbf{{y}}||\mathbf{p}) =  \sum_{k=1}^{K} \hat{y}_{k} \log(\frac{\hat{y}_{k}}{\hat{p}_{k}})$  \\
    region-size ${\cal L}_1$ distance &&  ${\cal L}_1 (\mathbf{{y}}, \mathbf{p}) = \sum_{k=1}^{K} | \hat{y}_{k} - \hat{p}_{k} | $  \\
    \makecell[l]{{\em Monte-Carlo} estimate of the \\entropy of features given region $k$}
    && $ {\cal H}({\cal F}|{\cal K} = k) \approx - \frac{1}{|\mathbf{\Omega}_k|} \sum_{i \in \mathbf{\Omega}_k} \log(\mathbb{P}(\mathbf{f}_i^{\theta}|k))$ \\
    \bottomrule
    \end{tabular}
    }
\end{table*}

Semantic segmentation is often stated as a pixel-wise classification task, following the optimization of a loss function for training a deep network.
\rev{Specifically, training dataset is defined as ${\cal D} = \{(X_n, Y_n)\}_n$. An input image is $X_n: \mathbf{\Omega} \rightarrow \mathbb{R}^2$, where $\mathbf{\Omega}\subset \mathbb{R}^2$ denotes the spatial image domain, and the corresponding ground truth (GT) is $Y_n: \mathbf{\Omega} \rightarrow \{0, 1\}^K$ where K is the number of classes.
We denote the region $k$ in GT as $\mathbf{\Omega}_k = \{i \in \mathbf{\Omega} | y_{ik} = 1\}$, and then the proportion of region $k$ is calculated as $\hat{y}_{k} = \frac{|\mathbf{\Omega}_k|}{|\mathbf{\Omega}|}$.
Assume we have a deep network parameterized by $\theta$, it generates a feature embedding (or logit) for each pixel $\mathbf{f}_i^{\theta}$. Note that the feature embedding is the input of the softmax probability prediction of the network, which is denoted as $(p_{ik})_{1 \leq k \leq K}$.
With the softmax prediction, the predicted proportion of class $k$ is calculated as $\hat{p}_{k} = \frac{1}{\mathbf{|\Omega|}}\sum_{i \in \mathbf{\Omega}} p_{ik}$.
In Table \ref{table:notations}, we listed all the notations, formulations and approximations used in this paper.
Besides the basic notations of the task (such as networks predictions), we explicitly include the loss functions, region-size regularizers and 
information-theoretic quantities that will be discussed in the following sections.}
We note that, to facilitate the reading of our analysis, we write the CE and Dice losses in a non-standard way using summations over the ground-truth segmentation regions, rather than as functions of 
the labels. Also, while we provide the CE loss for all segmentation regions, we give Dice for a single region. This is to accommodate two 
variants of the Dice loss in the literature: in the binary case, Dice is typically used for the foreground region only \citep{vnet2016}; in the multi-region case, it is commonly 
used over all the regions \citep{WongMTS18logdice}. Finally, to simplify notation, we give all the loss functions for a single training image, without summations over all training samples (as this does not lead 
to any ambiguity, neither does it alter the analysis hereafter). In the training iterations, we use the mean values across all the training samples 
via standard mini-batch optimization. 

\subsection{Definition of region-size biases and penalty functions}
In the following, we analyse the region-size biases inherent to CE and Dice losses, and show that the main difference between the two types of losses lies essentially in 
those region-size biases. To do so, we provide a constrained-optimization perspective of the losses.
\rev{For the discussion, consider specifically the following hard equality constraint:
\begin{equation}
\label{eq:hard_equality}
\mathbf{p} = \mathbf{t}
\end{equation}
where $\mathbf{t}$ is a given (fixed) target distribution.
In the general context of constrained optimization, penalty functions are widely used \citep{Bertsekas95}. 
Then we define a region-size bias as a principled soft {\em penalty} function for the above hard equality: $g(\mathbf{p})$, which is added to the main objective (e.g., CE) being minimized to replace the hard equality constraint.
Following the general principle of a soft-penalty optimizer, the penalty function $g$ increases when $\mathbf{p}$ deviates from target $\mathbf{t}$.
}
By definition, for the constraint $\mathbf{p} = \mathbf{t}$, with the domain 
of $\mathbf{p}$ being probability simplex $\Delta_K$, a penalty $g(\mathbf{p})$ is a continuous and differentiable function, which reaches 
its global minimum when the constraint is satisfied, i.e., it verifies: $g(\mathbf{t}) \leq g(\mathbf{p})\, \forall \mathbf{p} \in \Delta_K$.

\subsection{The link between Cross Entropy and Dice}

To ease the discussion in what follows, we will start by analyzing the link between CE and the logarithmic Dice, along with the region-size 
bias of the latter (Proposition \ref{prop:multi-class-Dice-Bias}). Then, we discuss a bounding relationship between the different Dice variants. 
Finally, we will provide an information-theoretic analysis, which 
highlights the hidden region-size bias of CE (Proposition \ref{prop:ce}).

Let us consider the logarithmic Dice loss in the multi-class case. This loss decomposes (up to a constant) into two terms, a ground-truth matching term and a region-size bias:  
\begin{equation}
\label{eq:logdice-multiclass}
- \sum_{k=1}^K \log(\text{Dice}_k) \ceq \underbrace{- \sum_{k=1}^K \log\left (\frac{1}{|\mathbf{\Omega}_k|}\sum_{i\in \mathbf{\Omega}_k} p_{ik} \right )}_{\text{Ground-truth matching: DF}} + 
 \underbrace{\sum_{k=1}^K \log\left ( \hat{p}_{k} + \hat{y}_{k} \right )}_{\text{region-size bias: DB}}
\end{equation}
where $\ceq$ stands for equality up to an additive and/or non-negative multiplicative constant. 

\rev{

\begin{proposition}
\label{prop:df-bound-ce}
The ground-truth matching term in the logarithmic Dice (DF in Eq. (\ref{eq:logdice-multiclass})) is lower bounded on the cross-entropy loss (CE):

\begin{equation}
\label{eq:df-bound-ce}
DF = - \sum_{k=1}^K \log\left (\frac{1}{|\mathbf{\Omega}_k|}\sum_{i\in \mathbf{\Omega}_k} p_{ik} \right ) \leq CE
\end{equation}
\end{proposition}

}

\rev{The detailed proof is deferred to Appendix A.1, which is mainly due to Jensen's inequality and the convexity of function $-\log(x)$.}

Therefore, minimizing CE could be viewed as a proxy for minimizing term DF that appears in the logarithmic Dice. In fact, from a 
constrained-optimization perspective, DF and CE are very closely related and could be viewed as two different penalty functions enforcing the same equality constraints: $p_{ik} = 1, \, \forall i \in {\mathbf \Omega}_k, \, \forall k$.
Both DF and CE are monotonically decreasing functions of each softmax and reach their global minimum when these equality constraints are satisfied. Therefore, they encourage softmax predictions $p_{ik}$ for 
each region ${\mathbf \Omega}_k$ to reach their target ground-truth values of $1$. Of course, this does not mean that penalties CE and DF yield exactly the same results. The difference in the results that 
they may yield is due to the optimization technique (e.g., different gradient dynamics in the standard training of deep networks as the penalty functions have different forms).

\subsection{The hidden region-size bias of Dice}
\label{sec:dice_bias}

The following proposition highlights how the region-size term DB in Eq. (\ref{eq:logdice-multiclass}) encourages specific extremely imbalanced solutions. 
\begin{proposition}
\label{prop:multi-class-Dice-Bias}
Let $\mathbf{t}=\left (\hat{t}_{j} \right )_{1 \leq j \leq K} \in \{0, 1\}^K$ denote the simplex vertex verifying: $\hat{t}_{j} = 1$ when $\hat{y}_{j} = \max_{\tiny{1 \leq k \leq K}} \hat{y}_{k}$
and $\hat{t}_{j} = 0$ otherwise. For variables $\mathbf{p}=\left (\hat{p}_{k} \right )_{1 \leq k \leq K}$ and fixed distribution $\mathbf{y} = \left (\hat{y}_{k} \right )_{1 \leq k \leq K}$, the 
region-size term in Eq. (\ref{eq:logdice-multiclass}) reaches its minimum over the simplex at $\mathbf{t}$:    
\begin{equation}
\label{imbalanced-bias-Dice}
\sum_{k=1}^K \log\left ( \hat{t}_{k} + \hat{y}_{k}  \right ) \leq \sum_{k=1}^K \log\left ( \hat{p}_{k} + \hat{y}_{k}  \right ) \quad \forall \mathbf{p} \in \Delta_K 
\end{equation}
\end{proposition}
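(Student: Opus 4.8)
The plan is to treat the label-marginal term as a function $g(\mathbf{p}) = \sum_{k=1}^K \log(\hat{p}_k + \hat{y}_k)$ of the variable $\mathbf{p}\in\Delta_K$ and exploit that it is \emph{concave}: each summand is the composition of the concave map $x\mapsto\log x$ with the affine map $\mathbf{p}\mapsto\hat{p}_k+\hat{y}_k$, so $g$ is a sum of concave functions and hence concave on $\Delta_K$. Concavity forces the minimum of $g$ to a vertex of the simplex, which I would make precise as follows. Write an arbitrary $\mathbf{p}\in\Delta_K$ as the convex combination of the standard simplex vertices $\mathbf{e}_1,\dots,\mathbf{e}_K$ with weights equal to its own coordinates, $\mathbf{p}=\sum_{j=1}^K\hat{p}_j\,\mathbf{e}_j$. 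By concavity of $g$ (Jensen's inequality),
\begin{equation*}
g(\mathbf{p}) \;\ge\; \sum_{j=1}^K \hat{p}_j\, g(\mathbf{e}_j) \;\ge\; \min_{1\le j\le K} g(\mathbf{e}_j),
\end{equation*}
so it remains only to show that this minimum over the $K$ vertices is attained at $\mathbf{t}$.

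For that I would evaluate $g$ at $\mathbf{e}_j$: since $\mathbf{e}_j$ has $\hat{p}_j=1$ and $\hat{p}_k=0$ for $k\ne j$,
\begin{equation*}
g(\mathbf{e}_j) \;=\; \log(1+\hat{y}_j) + \sum_{k\ne j}\log\hat{y}_k \;=\; \Big(\sum_{k=1}^K\log\hat{y}_k\Big) + \log\!\big(1+1/\hat{y}_j\big),
\end{equation*}
where the first term is independent of $j$. Since $t\mapsto\log(1+1/t)$ is strictly decreasing on $(0,1]$, the quantity $g(\mathbf{e}_j)$ is smallest exactly when $\hat{y}_j$ is largest, i.e. for any $j$ with $\hat{y}_j=\max_{1\le k\le K}\hat{y}_k$; by the definition of $\mathbf{t}$, this vertex is $\mathbf{t}$ and this value is $g(\mathbf{t})$. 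Chaining with the display above yields $g(\mathbf{t})\le g(\mathbf{p})$ for every $\mathbf{p}\in\Delta_K$, which is Eq.~(\ref{imbalanced-bias-Dice}).

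I do not expect a genuine obstacle; the only point requiring care is the degenerate case of empty ground-truth regions. If some $\hat{y}_k=0$, the terms $\log\hat{y}_k$ equal $-\infty$ and $g$ is improper; the chain of inequalities still holds in $[-\infty,\infty)$, and in fact $g(\mathbf{t})=-\infty$ as soon as any non-majority coordinate vanishes, so the statement remains (trivially) valid. For a clean and meaningful statement I would simply assume all ground-truth regions are non-empty, $\hat{y}_k>0$ for all $k$ — which is already implicit in writing the CE and logarithmic Dice losses, as both involve $\frac{1}{|\mathbf{\Omega}_k|}$ — in which case $g$ is finite on $\Delta_K$ and the argument above applies verbatim.
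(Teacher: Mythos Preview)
Your proposal is correct and follows essentially the same route as the paper: establish concavity of $g$, apply Jensen's inequality to reduce the minimum to the simplex vertices, and then compare vertices via the monotonicity of $t\mapsto\log(1+1/t)$. Your concavity argument (concave composed with affine) is in fact cleaner than the paper's Hessian computation, and your explicit handling of the degenerate case $\hat{y}_k=0$ is an addition the paper omits.
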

\begin{proof}
The details of the proof are deferred to \rev{Appendix A.2}. The main technical ingredient is based on Jensen's inequality and the concavity of penalty DB with respect 
to simplex variables $\mathbf{p}$.
\end{proof}
Inequality (\ref{imbalanced-bias-Dice}) means that the region-size term in Dice in Eq. (\ref{eq:logdice-multiclass}) is a penalty function for constraint $\mathbf{p}= \mathbf{t}$, where $\mathbf{t}$
is the simplex vertex given in Proposition \ref{prop:multi-class-Dice-Bias}. 
\rev{
Thus, this proposition demonstrates that the hidden bias term of Dice encourages the prediction result to be close to the simple vertex $\mathbf{t}=\left (\hat{t}{j} \right ){1 \leq j \leq K} \in {0, 1}^K$, where a specific region (i.e., the largest region according to the ground-truth labels and typically the background) includes all the pixels. Consequently, it can lead to imbalanced segmentation outputs, pushing the areas of the remaining classes (usually foreground) towards zero.
}
Therefore, it encourages extremely imbalanced segmentation prediction, where a specific region (i.e. the largest region according to the ground-truth labels) includes all the pixels and the remaining 
regions are empty. 
All in all, the logarithmic Dice loss integrates a hidden region-size prior preferring extremely imbalanced segmentations, which is optimized jointly with a ground-truth matching term 
similar to CE. It is worth noting that, in the two-class (binary) segmentation case, Dice might be used for the foreground region only, as in the popular work in \citep{vnet2016}, for instance. 
Similarly to the multi-class case discussed above, a single Dice also decomposes into a ground-truth matching term and region-size penalty, with the latter encouraging extremely imbalanced 
binary segmentations. We provide more details for this case in Appendix B.

\subsection{On the link between the different variants of Dice}

The region-size analysis we discussed above is based on the standard logarithmic Dice loss. Here, we argue that both logarithmic and linear Dice are very closely related and, hence, 
the linear Dice also hides a class-imbalance bias. In fact, from a constrained-optimization perspective, the two losses could be viewed as different penalty functions for imposing 
constraints: $\text{Dice}_k=1 \, \forall k$. Both functions -$\log(x)$ and $(1-x)$ are monotonically decreasing in $[0, 1]$ and achieve their minimum in $[0, 1]$ at $x=1$. Furthermore, the logarithmic Dice is 
an upper bound on the linear one. This follows directly from: $-\log \left (t \right ) \ge 1-t \quad \forall t >0$. Of course, this does not mean that optimizing these two variants leads to exactly 
the same results. The differences in their results might be due to optimization (i.e., different gradient dynamics stemming from logarithmic and linear penalties).

\subsection{The hidden region-size bias of CE}

In the following, we give an information-theoretic perspective of CE, via a generative view of network predictions and a Monte-Carlo approximation of the entropy of the learned 
features given the labels. This highlights a hidden region-size bias of CE, which encourages the proportions of the predicted segmentation regions to match the 
ground-truth proportions. \\ 

\begin{proposition}
\label{prop:ce}
Let ${\cal F}$ and ${\cal K}$ denote the random variables associated with the learned features and the labels, respectively, and ${\cal H}({\cal F}|{\cal K})$ the conditional entropy of learned features given the labels, estimated via Monte-Carlo :
\begin{equation}
\label{eq:monte-carlo-entropy-features}
{\cal H}({\cal F}|{\cal K}) \approx \sum_k^K y_k {\cal H}({\cal F}|{\cal K} = k) \approx - \frac{1}{|\mathbf{\Omega}|} \sum_k^K \sum_{i \in \mathbf{\Omega}_k} \log(\mathbb{P}(\mathbf{f}_i^{\theta}|k)) 
\end{equation}
where ${\cal H}({\cal F}|{\cal K} = k)$ is the empirical estimate of the conditional entropy of features 
given a specific class $k$ (expression in Table \ref{table:notations}) and $\mathbb{P}(\mathbf{f}_i^{\theta}|k)$ denotes the probability of the learned features given class $k$.    
We have the following generative view of {\em CE}:
\begin{align}
\label{eq:gce}
    \text{\em CE} \ceq \underbrace{\cal{H}(\cal{F}|{\cal{K}})}_{\text{Ground-truth matching}} + \underbrace{{\cal D}_\text{\em KL}({\mathbf y} || \mathbf{p})}_{\text{region-size bias}}
\end{align}
\end{proposition}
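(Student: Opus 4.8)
The plan is to convert the softmax outputs, which in this generative view are \emph{posteriors} $p_{ik}=\mathbb{P}(k\mid\mathbf{f}_i^{\theta})$, into the \emph{likelihoods} $\mathbb{P}(\mathbf{f}_i^{\theta}\mid k)$ appearing in the Monte-Carlo estimate~(\ref{eq:monte-carlo-entropy-features}) via Bayes' rule, and then to check that the resulting expression for $\mathcal{H}(\mathcal{F}\mid\mathcal{K})$ splits cleanly into (i) the cross-entropy loss, (ii) the label-marginal penalty $\mathcal{D}_{\text{KL}}(\mathbf{y}\|\mathbf{p})$, and (iii) additive/multiplicative constants. First I would make explicit the modeling assumption already implicit in~(\ref{eq:monte-carlo-entropy-features}): the pixels of $\mathbf{\Omega}$ carry the uniform prior $\mathbb{P}(i)=1/|\mathbf{\Omega}|$ (features at distinct pixels are treated as distinct, so $\mathbb{P}(\mathbf{f}_i^{\theta})=1/|\mathbf{\Omega}|$). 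The law of total probability then gives $\mathbb{P}(k)=\sum_{i\in\mathbf{\Omega}}\mathbb{P}(i)\,p_{ik}=\frac{1}{|\mathbf{\Omega}|}\sum_{i\in\mathbf{\Omega}}p_{ik}=\hat{p}_{k}$, i.e.\ the class prior in this picture is exactly the predicted label-marginal. Hence Bayes' rule reads $\mathbb{P}(\mathbf{f}_i^{\theta}\mid k)=p_{ik}\,\mathbb{P}(\mathbf{f}_i^{\theta})/\hat{p}_{k}=p_{ik}/(\hat{p}_{k}\,|\mathbf{\Omega}|)$.

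Next I would substitute this into $\mathcal{H}(\mathcal{F}\mid\mathcal{K}=k)\approx-\frac{1}{|\mathbf{\Omega}_k|}\sum_{i\in\mathbf{\Omega}_k}\log\mathbb{P}(\mathbf{f}_i^{\theta}\mid k)$, expand the logarithm of the product, and aggregate over $k$ with the weights $\hat{y}_{k}=|\mathbf{\Omega}_k|/|\mathbf{\Omega}|$ exactly as in~(\ref{eq:monte-carlo-entropy-features}). Three groups of terms appear. The $\log p_{ik}$ terms reassemble, up to the overall $1/|\mathbf{\Omega}|$ averaging factor inherent to the Monte-Carlo mean, into the cross-entropy loss. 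The $-\log\hat{p}_{k}$ terms are constant within each region, so $-\frac{1}{|\mathbf{\Omega}_k|}\sum_{i\in\mathbf{\Omega}_k}(-\log\hat{p}_{k})=\log\hat{p}_{k}$, and weighting by $\hat{y}_k$ and summing gives $\sum_{k}\hat{y}_{k}\log\hat{p}_{k}$; the elementary identity $\sum_{k}\hat{y}_{k}\log\hat{p}_{k}=-\mathcal{H}(\mathbf{y})-\mathcal{D}_{\text{KL}}(\mathbf{y}\|\mathbf{p})$ turns this into the label-marginal bias plus the constant $-\mathcal{H}(\mathbf{y})$. Finally the $\log\mathbb{P}(\mathbf{f}_i^{\theta})=-\log|\mathbf{\Omega}|$ terms contribute only the constant $\log|\mathbf{\Omega}|$. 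Collecting, $\mathcal{H}(\mathcal{F}\mid\mathcal{K})$ equals $1/|\mathbf{\Omega}|$ times the cross-entropy loss, minus $\mathcal{D}_{\text{KL}}(\mathbf{y}\|\mathbf{p})$, minus the constant $\mathcal{H}(\mathbf{y})$, plus the constant $\log|\mathbf{\Omega}|$; rearranging and absorbing the two additive constants and the non-negative multiplicative constant $1/|\mathbf{\Omega}|$ into the relation $\ceq$ yields $\text{CE}\ceq\mathcal{H}(\mathcal{F}\mid\mathcal{K})+\mathcal{D}_{\text{KL}}(\mathbf{y}\|\mathbf{p})$, which is~(\ref{eq:gce}).

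I expect the only genuinely delicate point to be the status of the marginal feature term $\mathbb{P}(\mathbf{f}_i^{\theta})$, equivalently the marginal entropy $\mathcal{H}(\mathcal{F})$: it is a bona fide constant only under the uniform-pixel-prior modeling choice above; if one instead posits a learnable density over feature space, $\mathcal{H}(\mathcal{F})$ picks up a $\theta$-dependence and one has to argue separately that this residual is a feature-space regulariser, orthogonal to the label-marginal decomposition and therefore harmless for the claim. Two further caveats should be stated in the write-up for honesty: the identity holds at the level of the Monte-Carlo approximations (the $\approx$ signs), and the identification $\mathbb{P}(k)=\hat{p}_{k}$ rests on the same uniform pixel prior via the law of total probability. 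Everything else is the routine logarithm split together with the standard ``cross-entropy $=$ entropy $+$ KL'' decomposition.
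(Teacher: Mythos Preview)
Your proposal is correct and follows essentially the same route as the paper: apply Bayes' rule to turn $\mathbb{P}(\mathbf{f}_i^{\theta}\mid k)$ into $p_{ik}/\hat{p}_k$ (up to a factor independent of $k$), plug into the Monte-Carlo estimate, split the logarithm, and recognise the KL term via $\sum_k \hat{y}_k\log\hat{p}_k \ceq -\mathcal{D}_{\text{KL}}(\mathbf{y}\|\mathbf{p})$. The paper's version is slightly terser---it writes Bayes as $\mathbb{P}(\mathbf{f}_i^{\theta}\mid k)\propto p_{ik}/\hat{p}_k$ and absorbs the feature marginal into the proportionality, whereas you make the uniform-pixel prior and the resulting $\log|\mathbf{\Omega}|$ constant explicit---but the substance is identical, and your extra care about where the constants come from and why $\mathbb{P}(k)=\hat{p}_k$ is, if anything, an improvement in rigour.
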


The detailed proof is deferred to Appendix A.3.
The approximation of ${\cal H}({\cal F}|{\cal K} = k)$ in the second line of Eq. (\ref{eq:monte-carlo-entropy-features})   
is based on the well-known Monte-Carlo estimation \citep{Kearns1998,TangMAB19Kernel}. Then the relationship in Eq. (\ref{eq:gce}) follows from Eq. (\ref{eq:monte-carlo-entropy-features}), after some manipulations, using 
Bayes rule $\mathbb{P}(\mathbf{f}_i^{\theta}|k) \propto \frac{p_{ik}}{\hat{p}_k}$ and $\sum_{i \in \mathbf{\Omega}_k} \log(\hat{p}_k) = |\mathbf{\Omega}_k| \log(\hat{p}_k)$.

This information-theoretic view of CE shows that the latter has an implicit (hidden) region-size bias towards the ground-truth region proportions (the KL term). 
This bias competes with the entropy term, which encourages low uncertainty (variations) within each ground-truth segmentation region $\mathbf{\Omega}_k$. 
The entropy term could be viewed as a ground-truth matching term: it reaches its global minima when the feature embedding is constant within each region.
If used alone, the entropy term may lead to trivial imbalanced solutions. The region-size KL term avoids such trivial solutions by matching the ground-truth 
class proportions. Note that there is no mechanism in CE to control the relative contributions of those two competing terms as they are implicit in CE.

\subsection{Our solution}
\label{sec:solution}

\begin{figure}[htb]
    \centering
    \includegraphics[width=0.8\columnwidth]{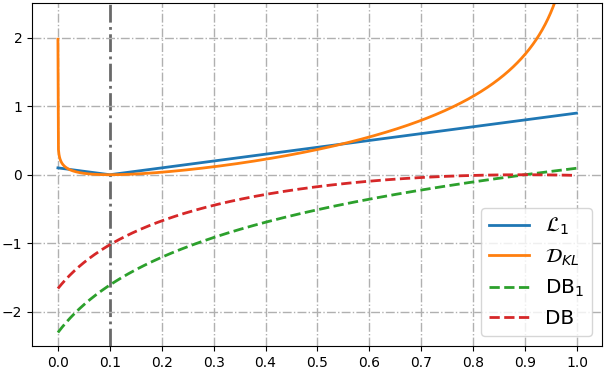}
    \caption{\textbf{Different region-size penalties.} The ground-truth foreground region proportion is set to $0.1$.
    The expression of penalty $\text{DB}_1$ is provided in Appendix B, and corresponds to the two-class (binary) variant of Dice, where the loss is used over the foreground region only.
    Penalty ${\cal L}_1$ presents better gradient dynamics at the vicinity of region size prediction $\hat{p}_1=0$.
    Best seen in color.}
    \label{fig:bias}
    \vspace{-5mm}
\end{figure}

Our analysis shows that Dice, CE and their combinations, e.g., $\text{CE} - \log(\text{Dice})$, are closely related and enforce two types of competing constraints : 
ground-truth matching and region-size constraints. However, there is no clear consensus in the literature as to which loss is better, with the 
performances of each varying across data sets and applications. This variability in performances could be explained by two fundamental factors:

$\bullet~$ {\bf The difference in the region-size prior}. The region-size priors are different as Dice has an intrinsic bias preferring very small regions, while CE encourages 
the right (ground-truth) region proportions. This might explain the wide experimental evidence in the medical imaging literature, where using or adding Dice losses brings improvements
for imbalanced segmentation with extremely small regions.

$\bullet~$ {\bf Weighting the contribution of the bias term}. Our analysis suggests that CE should be preferred over Dice in all cases and 
applications (both balanced/imbalanced segmentation, or segmentation problems with high variability in region proportions) as it promotes the right region-size distribution.
While this seems to be widely the case in natural image segmentation, where Dice is uncommon, the extensive experimental evidence in the medical-image segmentation literature 
suggests otherwise, especially in extremely imbalanced problems. We argue that this is due to the relative contribution of the region-size term in the overall objective. 
Controlling such region-size contribution is very important in imbalanced problems. In particular, it mitigates the difficulty that the ground-truth matching terms differ by several orders of 
magnitude across regions, as in CE, which causes large-region terms to completely dominate small-region ones. This analysis also resonates with the fact that combo losses such 
as CE - $\lambda \log$ (Dice) perform very competitively in imbalanced segmentation, as shown by \citep{WongMTS18logdice,taghanaki2019combo}, among several other recent works. In this case, 
controlling the relative contribution of each of these terms indirectly controls the weight of the region-size bias. Note that such control is not 
possible when using CE alone or Dice alone, as the region-size biases in these losses are hidden (implicit).

We propose a principled and simple solution, which enables to control explicitly the region-size bias, via regularization losses that encourage the correct class proportions 
and are used in conjunction with CE :
\begin{equation}
\label{eq:framework}
\text{RCE} = \text{CE} + \lambda {\cal R}(\mathbf{y}; \mathbf{p})
\end{equation}
Our region-size regularizers increase the contribution of the minority classes in imbalanced problems, but, unlike Dice, do not lose adaptability to problems 
with various class proportions. Our extensive experiments and ablation studies over different losses and applications demonstrate the effectiveness of 
our explicit region-size regularizers. We investigate different forms of regularization, including the ${\cal L}_1$ norm, i.e., 
${\cal R}(\mathbf{y}; \mathbf{p}) = {\cal L}_1 (\mathbf{y}, \mathbf{p})$, and
the KL divergence, i.e., 
${\cal R}(\mathbf{y}; \mathbf{p}) = {\cal D}_\text{KL}(\mathbf{{y}}||\mathbf{p})$; see Table \ref{table:notations} for the expressions of ${\cal D}_\text{KL}$ and ${\cal L}_1$.
In Fig. \ref{fig:bias}, we depict our different regularizers as functions of the region-size distribution for a binary-segmentation case, with the foreground-region proportion set to $0.1$, along with the bias terms in Dice.
While our ${\cal D}_\text{KL}$ and ${\cal L}_1$ regularizers may deliver comparable performances (see the experimental section), ${\cal L}_1$ might be a better option for extremely imbalanced segmentations, due to 
its gradient properties and stability at the vicinity of $0$, i.e., when the region-size probability $\hat{p}_1$ is close to $0$. Notice that, at the vicinity of zero, both first and second derivatives of the 
regularizer are {\em unbounded} for ${\cal D}_\text{KL}$, but {\em bounded} and {\em constant} for ${\cal L}_1$. Our experiments on imbalanced medical image segmentation confirm the effectiveness of the ${\cal L}_1$ regularizer.

\section{Experiments}

\begin{table}[t]
\begin{center}
\caption{\textbf{Quantitative evaluations of different losses on Kvasir and CVC-ClinicDB test sets. } All the models were conducted over three independent runs under the optimal hyper-parameters, and we report their average scores with standard deviations achieved on the test sets. Best method is highlighted in bold, whereas the second best method is underlined. 
}
\label{tab:polyp}
\resizebox{1.0\textwidth}{!}
{
\begin{tabular}{@{}lccccccccccc@{}}
\toprule
& \multicolumn{5}{c}{ KVasir} && \multicolumn{5}{c}{CVC-ClinicDB} \\
\multirow{3}{*}{Loss} & \multicolumn{2}{c}{\footnotesize R50FPN} && \multicolumn{2}{c}{\footnotesize{R50UNet}} && \multicolumn{2}{c}{\footnotesize{R50FPN}} && \multicolumn{2}{c}{\footnotesize{R50UNet}} \\
\cline{2-3} \cline{5-6} \cline{8-9} \cline{11-12}
 & \footnotesize{DSC(\%)} & \footnotesize{IoU(\%)}  && \footnotesize DSC(\%) & \footnotesize IoU(\%) && \footnotesize DSC(\%) & \footnotesize IoU(\%)  && \footnotesize DSC(\%)  & \footnotesize IoU(\%)  \\
\midrule
CE & $89.3\pm0.7$ & $80.7\pm1.1$  && $88.7\pm0.6$ & $79.7\pm0.9$ && $91.0\pm0.3$ & $84.2\pm0.6$ && $\underline{92.9\pm0.3}$ & $\bf 86.7\pm0.5$ \\ 
\rev{WCE} & \rev{$88.6 \pm 0.7$} & \rev{$79.5 \pm 1.1$} && \rev{$89.2 \pm 0.5$} & \rev{$80.5 \pm 0.9$} && \rev{$91.0 \pm 1.0$} & \rev{$83.5 \pm 1.8$} && \rev{$91.6 \pm 0.8$} & \rev{$84.6 \pm 1.3$}\\
FL & $89.1\pm0.6$ & $80.2\pm1.0$ && $89.5\pm0.7$ & $80.9\pm1.1$ && $91.2\pm0.9$ & $83.8\pm1.4$ && $92.2\pm0.8$ & $85.7\pm1.3$ \\
Dice & $89.5\pm0.4$ & $81.1\pm0.8$ && $89.1\pm0.3$ & $80.3\pm0.5$ && $91.8\pm0.8$ & $85.2\pm1.5$ && $92.3\pm0.5$ & $85.8\pm0.8$ \\
LogDice & $89.4\pm0.4$ & $80.9\pm0.7$ && $88.7\pm1.1$ & $79.8\pm1.8$ && $91.5\pm0.7$ & $84.3\pm1.1$ && $92.5\pm0.5$ & $86.1\pm0.9$ \\
\midrule
DiceCE & $89.4\pm0.2$ & $80.8\pm0.3$ && $89.4\pm0.9$ & $80.8\pm1.4$ && $90.5\pm0.6$ & $82.6\pm1.1$ && $91.9\pm0.7$ & $85.0\pm1.1$  \\
DiceFL & $89.7\pm0.4$ & $81.2\pm0.7$ && $89.6\pm0.9$ & $81.2\pm1.4$ && $91.2\pm0.6$ & $83.8\pm1.1$ && $91.5\pm0.9$ & $84.4\pm1.5$ \\
LogDiceCE & $\underline{90.3\pm0.6}$ & $\underline{82.3\pm0.9}$ && $89.3\pm0.6$ & $80.7\pm0.9$ && $91.5\pm0.7$ & $84.3\pm1.1$ && $92.8\pm0.7$ & $\underline{86.6\pm1.1}$ \\
LogDiceFL & $89.4\pm0.3$ & $80.8\pm0.4$ && $89.6\pm0.9$ & $81.2\pm1.4$ && $90.9\pm0.4$ & $83.3\pm0.7$ && $92.3\pm0.4$ &  $85.7\pm0.8$ \\
DBCE & $89.8\pm0.4$ & $81.4\pm0.6$ && $\underline{90.0\pm0.4}$ & $\underline{81.8\pm0.8}$ && $91.3\pm0.7$ & $84.0\pm1.1$ && $91.8\pm0.6$ & $84.8\pm1.0$ \\
\midrule
RFL(${\cal D}_\text{KL}$)  & $89.4\pm0.3$ & $80.8\pm0.5$ && $89.2\pm0.6$ & $80.6\pm1.0$ && $90.4\pm0.4$ & $82.4\pm0.7$ && $92.7\pm0.2$ & $86.2\pm0.3$ \\
RFL(${\cal L}_1$) & $89.8\pm0.1$ & $81.6\pm0.2$ && $89.9\pm0.2$ & $\underline{81.8\pm0.3}$ && $\underline{92.5\pm0.5}$ & $\underline{86.1\pm0.9}$ && $92.8\pm0.3$ & $86.5\pm0.6$  \\
RCE(${\cal D}_\text{KL}$)  & $89.3\pm0.3$ & $80.7\pm0.4$ && ${89.9\pm0.4}$ & $81.7\pm0.7$ && $90.2\pm0.6$ & $82.2\pm0.9$ && $92.7\pm0.4$ & $86.4\pm0.6$ \\
{\bf RCE(${\cal L}_1$)} & $\bf 90.7\pm0.4$ & $\bf 83.0\pm0.6$ && $\bf 90.3\pm0.4$ & $\bf 82.2\pm0.6$ && $\bf 93.5\pm0.1$ & $\bf 87.7\pm0.1$ && $\bf 93.2\pm0.8$ & $\bf 86.7\pm1.1$  \\
\bottomrule
\end{tabular}
}
\end{center}
\vspace{-5mm}
\end{table}

\subsection{Experimental settings}

\noindent \textbf{Datasets.} We first evaluate all the losses on two 2D medical image segmentation applications, including two Polyp segmentation benchmarks, i.e. KVasir and CVC-ClinicDB, and one Retinal Lesions segmentation dataset \citep{Wei2020RetinalLesion}.
Then, we extend all the losses to 3D medical image segmentation and evaluate on \revA{three} standard datasets, i.e.  \revA{Pancreas \& Tumor,  Liver \& Tumor \citep{antonelli2021medical}, and AMOS\citep{ji2022amos}}. Here, we present the description of all the data sets used in our experiments.

$\bullet~$ \textbf{KVasir \citep{jha2020kvasir} and CVC-ClinicDB \citep{BERNAL201599cvc}} are two popular polyp datasets. KVasir contains $1,000$ samples in highly variant resolutions from $487\times332$ to $1920\times1072$ pixels, and CVC-ClinicDB contains $612$ images of $388\times284$ size.
Samples from both sets are collected from substantial specularities, with great variability in polyp types, sizes and appearances.
For both data sets, we follow the training setting in \citep{fan2020pra} by using $80\%$ samples for training, $10\%$ for validation, and by evaluating on the rest as testing samples.

$\bullet~$ \textbf{Retinal Lesions} \citep{Wei2020RetinalLesion} is a large collection of color fundus images.
A panel of 45 experienced ophthalmologist was formed to label this dataset and each image was assigned to at least three annotators to get trustworthy pixel-level lesions annotations \citep{Wei2020RetinalLesion}.
In our experiments, we employ its public version\footnote{\url{https://github.com/WeiQijie/retinal-lesions}}, consisting of $1,593$ samples and conduct our experiments in the binary scenario (i.e. segmenting the lesion region versus background).
The data set is randomly divided into training ($70\%$), validation ($10\%$) and testing ($20\%$) sets, with the images being resized to $512\times 512$.

$\bullet~$ \textbf{\revA{Pancreas \& Tumor}} and \textbf{\revA{Liver \& Tumor}} data sets are provided in the Medical Segmentation Decathlon \citep{antonelli2021medical}.
\revA{Pancreas \& Tumor} includes 281 portal-venous phase 3D CT scans of patients undergoing resection of pancreatic masses. The segmenting targets consists of two categories, i.e., pancreas and tumor.
For \revA{Liver \& Tumor}, it consists of \rev{131 contrast-enhanced CT cases with public available labels}.
The corresponding regions of interest are the segmentation of the liver and tumors inside the liver.
On both datasets, the unbalance between the large (background), medium (pancreas or liver) and small (tumor) structures makes them significantly challenging. 
The dataset is randomly split into $80\%$ for training and $20\%$ for testing.

\rev{
$\bullet~$ \textbf{AMOS} \citep{ji2022amos}, i.e., the Abdominal Multi Organ Segmentation 2022 challenge, is a large-scaled benchmark for  abdominal multi-organ segmentation from CT/MRI scans.
It consists of 600 CT/MRI scans, where 15 categories of organs are labeled.
The samples are splitted into $200+40$ (CT+MRI) for training, $100+20$ for validation and $200+40$ for testing.
In our experiments, we utilize Task 1 in the challenge, i.e., multi-organ segmentation on the CT Images, to evaluate the performances of our loss in comparison with baselines.
We train all the models on the training set, and report performances on the validation set.
}

\noindent \textbf{Baselines.}
We compare the proposed loss function in Eq. (\ref{eq:framework}) under two different penalty terms, i.e., RCE(${\cal D}_\text{KL}$) and RCE(${\cal L}_1$), with widely used losses like the cross entropy (CE), focal loss (FL), standard Dice loss (Dice), and logarithmic Dice loss (LogDice), as well as with competitive compound losses like DiceCE, DiceFL, LogDiceCE and LogDiceFL \citep{ma2021loss}.
\rev{
Note that all the compound losses consist of a balancing weight $\lambda$ to control the relative contribution of the two terms.
For hyper-parameters (i.e., the balancing weight $\lambda$) tuning, we perform ablation study on the 2D segmentation tasks, as shown in Fig.~\ref{fig:lambda}.
We find the the best setting for each loss is consistent on both tasks, then we fix the balancing weights on the 3D segmentation applications.
In particular, the $lambda$ for the proposed loss with ${\cal L}_1$ regularizer ($RCE({\cal L}_1)$ and $RCE({\cal D}_\text{KL}$) is set to $1.0$, while it is set to $0.1$ when we use ${\cal D}_\text{KL}$.
The balancing weights for Dice related losses, including DiceCE, DiceFL, LogDiceCE and LogDiceFL, are all set to $0.1$.
}
\rev{Note that for Focal loss, we keep the setting from the original paper \citep{Lin2017FocalLoss}, i.e., $\gamma = 2$.}
In addition, we also evaluate the performance of incorporating focal loss with region-size penalties, denoted as as RFL(${\cal D}_\text{KL}$) and RFL(${\cal L}_1$).
In our implementation for computing the predicted region proportion, we modify the soft-max function with a temperature parameter. This enables a better estimate of the actual region proportion (refer to Appendix C for details).

\noindent \textbf{Training details.}
On the 2D medical image segmentation benchmarks, i.e. KVasir \citep{jha2020kvasir}, CVC-ClinicDB \citep{BERNAL201599cvc} and Retinal Lesions \citep{Wei2020RetinalLesion}, the standard encoder-decoder segmentation network is used with variant options of encoder and decoder structures (R50FPN and R50UNet), whose implementations are publicly available\footnote{\url{https://github.com/qubvel/segmentation_models.pytorch}}.
We train the model during $60$ epochs with batch size set to $8$ via Adam optimizer.
The initial learning rate is set to 1e-4 and halved if the loss on the validation set does not decrease within 5 epochs.
Regarding the 3D medical segmentation benchmarks of Pancreas and Liver, \rev{and AMOS}, we employ the state-of-the-art 3D nnUNet \citep{Fabian2021nnUnet}.
The SGD optimizer is used for $1$k epochs with a batch size of $2$.
The initial learning rate is set as $0.01$ and decayed throughout the training via polynomial strategy to linearly scale down to $0$.

\noindent \textbf{Evaluation metrics.}
We report standard metrics on each dataset following previous works.
On the polyp applications of KVasir and CVC-ClinicDB, we report both Dice Similarity Coefficient (DSC) and Intersection over union (IoU) following \citep{fan2020pra}.
For Retinal Lesions, besides DSC, we also include a boundary-based metric, \rev{Normalised Surface Distance (NSD) \citep{Stanislav2018clinic}}.
Finally, on the 3D medical imaging, we use the standard DSC as the evaluation measure.

\subsection{Results}

\begin{table}[htb]
\begin{center}
\caption{\textbf{Quantitative evaluations of different losses on Retinal Lesions.} Average DSC and \rev{NSD} values (and standard deviation over three independent runs) achieved on the test set are reported. Note that $\text{Dice}_1$ is implemented for all the Dice related losses for the binary setting on this dataset, and DiceBias here refers to region-size bias for the binary Dice (details can be found in Appendix B). 
}
\label{tab:retinal}
{
\begin{tabular}{@{}lccccc@{}}\toprule
\multirow{3}{*}{Loss} & \multicolumn{2}{c}{\footnotesize{R50FPN}} && \multicolumn{2}{c}{\footnotesize{R50UNet}} \\
\cmidrule{2-3} \cmidrule{5-6}
& {\footnotesize DSC (\%)} & \rev{{\footnotesize NSD (\%)}} && {\footnotesize DSC (\%)} & \rev{{\footnotesize NSD (\%)}} \\
\midrule
CE & $52.7\pm0.1$ & \rev{$14.6\pm0.7$} && $52.7\pm0.3$ & \rev{$15.9\pm0.5$} \\ 
\rev{WCE} & \rev{$53.3\pm0.4$} & \rev{$14.7\pm0.2$} &&  \rev{$53.4 \pm 0.3$} & \rev{$15.8 \pm 0.5$} \\ 
FL & $52.9\pm0.4$ & \rev{$15.2\pm0.4$} && $51.8\pm0.6$ & \rev{$16.0\pm0.6$} \\
Dice & $52.0\pm0.7$ & \rev{$14.9\pm0.7$} && $53.2\pm0.1$ & \rev{$15.6\pm0.3$} \\
LogDice & $52.0\pm1.0$ & \rev{$14.8\pm0.3$} && $53.5\pm0.5$ & \rev{$15.5\pm0.5$} \\
\midrule
DiceCE & $53.2\pm0.5$  & \rev{$14.7\pm0.6$} && $53.0\pm0.9$ & \rev{$16.2\pm0.3$} \\
DiceFL & $52.6\pm0.5$ & \rev{$15.2\pm0.3$} && $53.4\pm0.6$ & \rev{$16.3\pm0.4$} \\
LogDiceCE & $53.2\pm0.5$ & \rev{$15.3\pm0.3$} && $53.4\pm0.3$ & \rev{$16.1\pm0.2$} \\
LogDiceFL & $53.6\pm0.8$ & \rev{$15.1\pm0.8$} && $53.6\pm1.0$ & \rev{$15.5\pm0.9$} \\
DBCE &  $53.4\pm0.4$ & \rev{$15.5\pm0.5$} && $53.2\pm0.1$ & \rev{$16.7\pm0.3$} \\
\midrule
RFL & $53.8\pm0.4$ & \rev{$15.8\pm0.1$} && $53.7\pm0.6$ & \rev{$17.0\pm0.2$} \\
{\bf RCE} &  $\bf 54.5\pm0.2$ & \rev{$\bf 16.0\pm0.3$} && $\bf 54.3\pm0.2$ &  \rev{$\bf 17.5\pm0.4$} \\
\bottomrule
\end{tabular}
}
\end{center}
\end{table}

\begin{figure}[htb]
\centering
\includegraphics[width=1.0\textwidth]{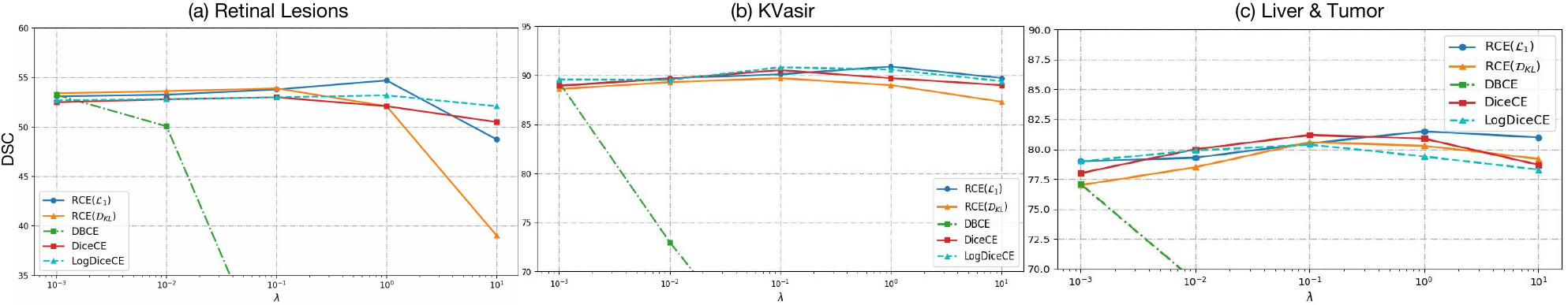}
\caption{\rev{\textbf{Ablation study on the balancing weight $\lambda$.} The performances of different compounding losses on the test set of (a) Retinal Lesions, (b) KVasir \revA{and (c) Liver \& Tumor} with different values of\ 
the balancing weight $\lambda$. For the 2D image datasets, the network here is fixed to R50FPN.}}
\label{fig:lambda}
\end{figure}

\subsubsection{Results on polyp datasets.}

The results on the two polyp datasets, i.e. KVasir and CVC-ClinicDB, are reported in Table~\ref{tab:polyp}.
To reduce the effect of randomness in the experiments, we report the average scores with the standard deviations over three independent runs for each model.
We present all the results into three groups due to the types of losses: single-term losses, compound losses and our proposed losses.
As seen in Table~\ref{tab:polyp}, the best of our proposed losses, RCE(${\cal L}_1$), consistently achieve the best performance over all the networks and metrics.
On R50FPN and KVaisir data set, for example, and in comparison to the CE baseline, the best performing among our methods integrating CE and the region-size bias brings $1.4\%$ and $2.3\%$ absolute improvements in terms of DSC and IoU, respectively. On the CVC-ClinicDB data set, and with the same model, the absolute improvements are $2.5\%$ in DSC and $3.5\%$ in IoU.
Moreover, our simple compound loss consistently outperforms composite losses integrating CE and Dice, as well as those integrating FL and Dice.
Theses results empirically validate our theoretical perspective, and shows that better and more stable results could be achieved with simple, explicitly controllable region-size terms.
Another interesting finding is that DBCE, the compound loss of CE and DB (DB denotes the bias term of dice loss in Eq.~\ref{eq:logdice-multiclass}), is able to yield similar scores to DiceCE and LogDiceCE, further validating the observation that the main difference between CE and Dice is in their region-size terms.

Regarding our method in Eq.~\ref{eq:framework}, different forms of the region-size penalties could be used.
We investigate two discrepancy measures on region proportions, i.e., ${\cal L}_1$ and ${\cal D}_\text{KL}$.
As shown in Table~\ref{tab:polyp}, RCE(${\cal L}_1$) and RFL(${\cal L}_1$) delivers better performances than RCE(${\cal D}_\text{KL}$) and RFL(${\cal D}_\text{KL}$).
For example, the R50FPN model trained with RCE(${\cal L}_1$) achieves an average DSC of $90.7\%$ on the KVasir test set, which corresponds to an improvement of $1.4\%$ over RCE(${\cal D}_\text{KL}$).
It yields $93.5\%$ under the same setting on CVC-ClinicDB, bringing over $3\%$ improvement over RCE(${\cal D}_\text{KL}$).
This is mainly due to the more stable gradient dynamics of ${\cal L}_1$, as shown in Fig.~\ref{fig:bias}.
Thus, we will only report the performances with ${\cal L}_1$ penalty for the rest of our experiments, denoted as RCE and RFL.
We note that the same trend remains when we replace CE with FL in Eq.~\ref{eq:framework}.

\begin{figure*}[htb]
    \centering
    \includegraphics[width=1.0\textwidth]{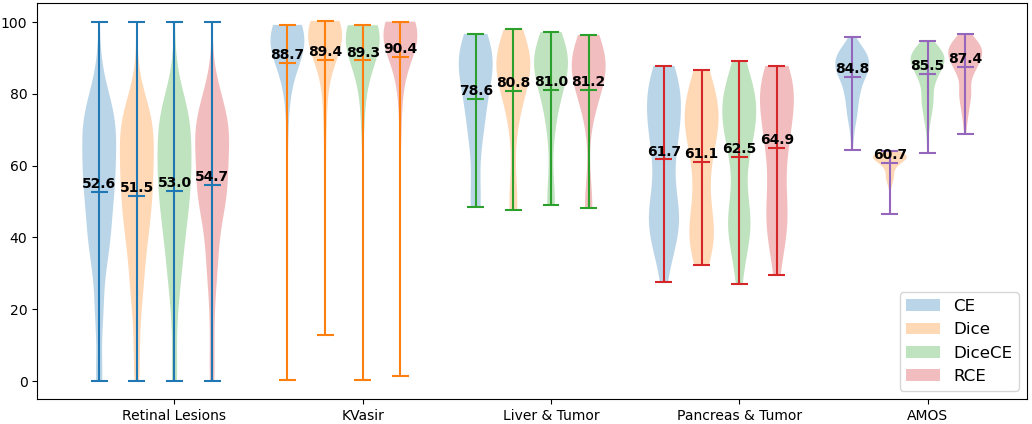}
    \caption{\rev{Violin plots showing the DSC distribution on \revA{the five datasets} for different methods. The network is fixed to R50FPN.}}
    \label{fig:violin}
\end{figure*}

\begin{figure*}[htb]
    \centering
    \includegraphics[width=1.0\textwidth]{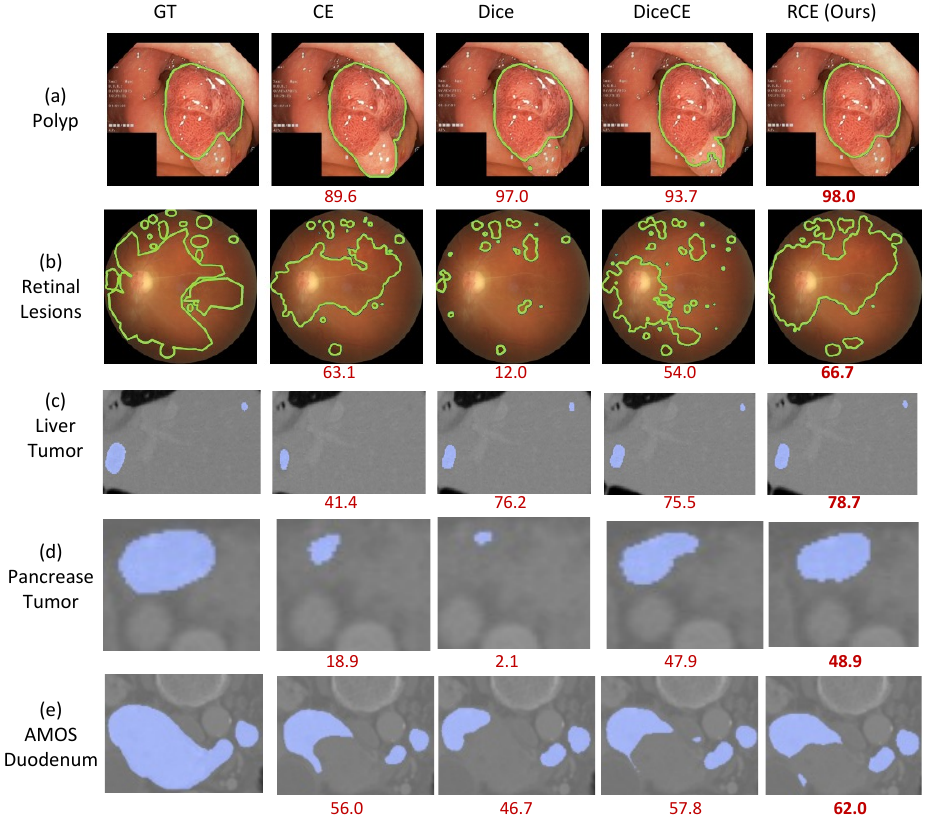}
    \vspace{-2mm}
    \caption{\textbf{Visual results of different segmentation losses.} Examples are from the four datasets : (a) Polyp, (b) Retinal Lesions, (c) Liver Tumor, (d) Pancreas Tumor \revA{and (e) AMOS Duodenum}. The ground-truth is provided in the first column. At the bottom of each prediction, we indicate the corresponding DSC ($\%$) score.}
    \label{fig:example}
    \vspace{-2mm}
\end{figure*}

\subsubsection{Results on Retinal Lesions.}

Table \ref{tab:retinal} reports the quantitative comparison between the proposed loss and the related methods on Retinal Lesions benchmark.
Regardless of the networks and metrics, our method (RCE) consistently achieves the best performance over different settings.
On R50FPN, for instance, compared to CE, our best model, i.e., RCE with ${\cal L}_1$  as the region-size regularizer, brings nearly $2.0\%$ improvement in terms of DSC and \rev{$1.4\%$ in terms of NSD}.
\rev{While DSC and NSD highlight different dimension of the result, i.e., internal filling of the target region and segmentation boundary, the proposed model is demonstrated to be more effective for both metrics.}
Regarding the robustness, the proposed losses present a more stable performance across different runs and backbones, which is reflected in the relatively lower variances. This can be explained by their better adaptability to different target sizes and better gradient dynamics at the vicinity of predicted region size $\hat{p}_k=0$ (as shown in Fig. \ref{fig:bias}).
Similar to the results on the polyp datasets, one could make the same observation: The scores of combining CE with the bias term of Dice (DBCE) are close to the CE-Dice combo losses (i.e. DiceCE and LogDiceCE). This further confirms our theoretical insight stating that the fundamental difference between CE and Dice lies in their distinct hidden region-size biases.
\rev{In Fig.~\ref{fig:violin}, we show the violin plots regarding the DSC distribution on Retinal lesions for different methods, which presents similar conclusion as the quantitative scores that our simple method is able to improve the prediction distribution across all the samples.}

\subsubsection{Ablation study on the balancing weight.}

We study the impact of the balancing weight $\lambda$ in the proposed loss in Eq. (\ref{eq:framework}), as well as the balancing weight in the other composite losses like DiceCE and LogDiceCE, presented in Fig. \ref{fig:lambda}.
It is empirically found that the best $\lambda$ values for different penalty terms are consistent on \revA{three different datasets, including both 2D and 3D images}: $1.0$ for RCE(${\cal L}_1$), $0.1$ for RCE(${\cal D}_\text{KL}$), and $0.1$ for DiceCE and LogDiceCE.
From the curves, we can notice that ${\cal L}_1$ is a better choice than KL divergence for the proposed loss because of its better stability.
This may relate to its gradient properties and stability at the vicinity of $0$ as shown in Fig. \ref{fig:bias}.
Thus, we can use a relatively larger weighting value in RCE(${\cal L}_1$).
For the loss integrating CE with the Dice bias term, i.e., DBCE, we demonstrate that it can yield performances similar to DiceCE and LogDiceCE, but it drops  significantly with high weighting values ($\lambda \geq 0.1$).
This might be due to its gradient characteristics. 
Comparing to the widely suggested composite loss of CE and Dice, our method deliver better performance with the same hyper-parameter budget.
\rev{Note, we use the best empirical values of $\lambda$ presented in Fig.~\ref{fig:lambda} for the experiments on 3D medical image segmentation.}

\begin{table}[htb]
\begin{center}
\caption{\rev{\textbf{Results on two 3D medical image segmentation, i.e., \revA{Pancreas \& Tumor, and Liver \& Tumor}} We report the average DSC scores with standard deviations on validation set, across three independent runs.}
}
\vspace{-2mm}
\label{tab:3dmed}
\resizebox{0.9\columnwidth}{!}
{
\begin{tabular}{@{}llcccccccccc@{}}
\toprule
&& \multicolumn{3}{c}{\footnotesize{\revA{Pancreas \& Tumor}}} && \multicolumn{3}{c}{\footnotesize{\revA{Liver \& Tumor}}}  \\
\cmidrule{3-5} \cmidrule{7-9}
Loss && \revA{pancreas} & \revA{tumor} & mean && \revA{liver} & \revA{tumor} & mean  \\
\midrule
CE && \rev{$81.9 \pm 0.5$} & \rev{$41.7 \pm 0.8$} & \rev{$61.8 \pm 0.7$} && \rev{96.5 $\pm$ 0.2} & \rev{61.1 $\pm$ 1.0} & \rev{78.8 $\pm$ 0.4} \\ 
FL 
&& \rev{$81.6 \pm 0.4$} & \rev{$42.4 \pm 0.5$} & \rev{$62.0 \pm 0.4$} && \rev{96.6 $\pm$ 0.2} & \rev{63.4 $\pm$ 0.6} & \rev{80.0 $\pm$ 0.4} \\
Dice
&& \rev{$80.8 \pm 0.8$} & \rev{$43.3 \pm 1.0$} & \rev{$61.9 \pm 0.8$} && \rev{96.4 $\pm$ 0.2} & \rev{65.1 $\pm$ 0.5} & \rev{80.8 $\pm$ 0.3} \\
LogDice 
&& \rev{$81.8 \pm 0.7$} & \rev{$42.7 \pm 1.0$} & \rev{$62.3 \pm 0.8$} && \rev{96.4 $\pm$ 0.3} & \rev{63.3 $\pm$ 0.6} & \rev{79.8 $\pm$ 0.4} \\
\midrule
DiceCE  && \rev{$81.7 \pm 0.4$} & \rev{$43.1 \pm 0.4$} & \rev{$62.4 \pm 0.4$} && \rev{96.5 $\pm$ 0.2} & \rev{65.9 $\pm$ 0.4} & \rev{81.2 $\pm$ 0.3} \\
DiceFL && \rev{$81.9 \pm 0.4$} & \rev{$43.6 \pm 0.3$} & \rev{$62.8 \pm 0.4$}  && \rev{96.6 $\pm$ 0.2} & \rev{64.9 $\pm$ 0.5} & \rev{80.8 $\pm$ 0.3} \\
LogDiceCE 
&& \rev{$81.5 \pm 0.2$} & \rev{$43.1 \pm 0.2$} & \rev{$62.3 \pm 0.2$} && \rev{\bf 96.7 $\pm$ 0.1} & \rev{64.1 $\pm$ 0.5} & \rev{80.4 $\pm$ 0.3} \\
LogDiceFL && \rev{$82.0 \pm 0.3$} & \rev{$45.5 \pm 0.2$} & \rev{$63.8 \pm 0.2$} && \rev{96.5 $\pm$ 0.4} & \rev{65.5 $\pm$ 0.5} & \rev{81.0 $\pm$ 0.4} \\
\midrule
RFL && \rev{$82.1 \pm 0.2$} & \rev{$45.5 \pm 0.5$} & \rev{$63.8 \pm 0.3$} && \rev{96.4 $\pm$ 0.3} & \rev{64.8 $\pm$ 0.5} & \rev{80.6 $\pm$ 0.3} \\
{\bf RCE} && \rev{$\bf 82.3 \pm 0.2$} & \rev{$\bf 47.7 \pm 0.3$} & \rev{$\bf 65.0 \pm 0.2$} && \rev{96.5 $\pm$ 0.3} & \rev{\bf 66.4 $\pm$ 0.4} & \rev{\bf 81.5 $\pm$ 0.3} \\
\bottomrule
\end{tabular}
}
\end{center}
\end{table}

\subsubsection{Results on 3D medical image segmentation.}
We now investigate the performance on two 3D medical image segmentation benchmarks, i.e., Pancreas and Liver, whose results are reported in Table~\ref{tab:3dmed}.
The proposed approach yielded the best performance in most cases, which is consistent with the empirical results on 2D images shown in Table~\ref{tab:polyp} and Table~\ref{tab:retinal}.
In particular, the improvement on the more challenging class (small structured tumor) are significant. Compared to CE, for instance, our best model increases the DSC of tumor on Pancreas and Liver by $4.6\%$ and $6.6\%$, respectively.
This further validates our theoretical analysis, which suggests that Dice-related losses (linear and logarithmic Dice) can bring improvements over CE for the categories with small region sizes, while the proposed method is more robust to variations in region sizes, yielding a better overall segmentation performances.

\begin{table}[htb]
\begin{center}
\caption{\textbf{Results on AMOS dataset.}
\revA{Average DSC and NSD values (and standard deviation over three independent runs) achieved on the validation set are reported.
Mean DSC (mDSC) score on validation is reported for each method.}}
\label{tab:amos}
\begin{tabular}{@{}lcc@{}}
\toprule
\rev{Loss} & \rev{mDSC (\%)} & \revA{NSD (\%)} \\
\midrule
\rev{CE} & \rev{84.7 $\pm$ 0.3} & \revA{72.5 $\pm$ 0.2} \\
\rev{FL} & \rev{83.8 $\pm$ 0.5} & \revA{70.1 $\pm$ 0.5} \\
\rev{Dice} & \rev{61.0 $\pm$ 0.7} & \revA{54.1 $\pm$ 0.8} \\
\rev{LogDice} & \rev{42.3 $\pm$ 1.0} & \revA{36.0 $\pm$ 1.3} \\
\rev{DiceCE} & \rev{84.6 $\pm$ 0.5} & \revA{72.3 $\pm$ 0.6} \\
\rev{LogDiceCE} & \rev{86.9 $\pm$ 0.1} & \revA{75.5 $\pm$ 0.2} \\
\midrule
\rev{\textbf{RCE (Ours)}} & \rev{\textbf{87.5 $\pm$ 0.3}} & \revA{\textbf{76.5 $\pm$ 0.5}} \\
\midrule
\end{tabular}
\end{center}
\end{table}

\rev{
Table~\ref{tab:amos} presents the mean DSC score for different methods on AMOS benchmark.
Under the multiple categories setting, our method, RCE, also outperforms all the baselines, further demonstrating the robustness of the proposed loss.
It is worth noting that the mean scores of singular Dice and LogDice losses decrease significantly compared to the baseline of CE, which is due to the failure to handle some particular classes, as shown in the per-class performances reported in Appendix E.
This phenomenon consolidates our theoretical insights that Dice-related losses have limitations in adapting to different categories with diverse region areas.

}

\subsubsection{Qualitative results.}
\rev{Some positive visualized examples from medical datasets are presented in Fig. \ref{fig:example}, providing subjective insights on the benefits of our method}. 
We can observe that the model trained with Dice variants (\textit{third} and \textit{fourth column}) tends to under-segment large and medium target regions (Top, Bottom), while CE could miss some small structures (Middle). In contrast, the proposed solution enables a better trade-off between finding small regions, reducing the number of false positives and matching the size of the larger targets. %
Furthermore, one can notice that our loss yields better consistencies with the target region proportions than the others.

\section{Conclusion}
We provided a detailed theoretical analysis of the two most popular semantic segmentation losses, i.e., Cross-entropy and Dice, which revealed non-obvious bounding relationships 
and hidden region-size biases, suggesting that CE is a better option in general.
Then, we showed how both loss functions could be written under a common formulation, containing a ground-truth matching term and a region-size bias.
The implicit bias in Dice prefers small regions, improving its performance in highly imbalanced conditions, as in medical-imaging applications.
The bias hidden in CE encourages the ground-truth region proportion, which makes it a generally better option in complex scenarios with diverse class proportions.
Furthermore, we proposed a principled solution, which enables to control the region-size bias via ${\cal L}_1$ and KL penalties that encourage the target class proportions, while 
improving training stability. 
Our flexible formulation enables the minority classes to have better influence on training, without losing adaptability to medium-to-large regions. Extensive experiments on four benchmarks, covering various 2D and 3D medical-imaging applications, validate the theoretical analysis in this paper, as well as the effectiveness of the presented solution.
\rev{While the proposed method show promising results under different scenarios, it may have limitation in satisfying specific requirements. For example, certain applications might prioritize higher recall over precision, where more appropriate losses should be preferred.}
\revA{
Consequently, exploring loss functions that could adapt to the specific needs of individual domains or applications could be an interesting avenue for future research.
One possible direction is to investigate different constraints or penalties for different requirements.
Another important topic is to study the optimizer on its effectiveness to achieve the objective of the loss, especially when we train or fine-tune a large-scaled foundation model, like SAM \citep{kirillov2023segment}.
}

\section*{Acknowledgements}

This work is supported by the National Science and Engineering Research Council of Canada (NSERC), via its Discovery Grant program and FRQNT through the Research Support for New Academics program.
It is also supported by Prompt Quebec, and we thank Calcul Quebec and Compute Canada for providing computing resources.
Adrian Galdran was funded by a Marie Sklodowska-Curie Fellowship (No 892297).

\bibliographystyle{model2-names.bst}\biboptions{authoryear}
\bibliography{main_arxiv}

\begin{thebibliography}{56}
\expandafter\ifx\csname natexlab\endcsname\relax\def\natexlab#1{#1}\fi
\providecommand{\url}[1]{\texttt{#1}}
\providecommand{\href}[2]{#2}
\providecommand{\path}[1]{#1}
\providecommand{\DOIprefix}{doi:}
\providecommand{\ArXivprefix}{arXiv:}
\providecommand{\URLprefix}{URL: }
\providecommand{\Pubmedprefix}{pmid:}
\providecommand{\doi}[1]{\href{http://dx.doi.org/#1}{\path{#1}}}
\providecommand{\Pubmed}[1]{\href{pmid:#1}{\path{#1}}}
\providecommand{\bibinfo}[2]{#2}
\ifx\xfnm\relax \def\xfnm[#1]{\unskip,\space#1}\fi
\bibitem[{Antonelli et~al.(2021)Antonelli, Reinke, Bakas, Farahani,
  AnnetteKopp-Schneider, Landman, Litjens, Menze, Ronneberger, Summers, van
  Ginneken, Bilello, Bilic, Christ, Do, Gollub, Heckers, Huisman, Jarnagin,
  McHugo, Napel, Pernicka, Rhode, Tobon-Gomez, Vorontsov, Huisman, Meakin,
  Ourselin, Wiesenfarth, Arbelaez, Bae, Chen, Daza, Feng, He, Isensee, Ji, Jia,
  Kim, Kim, Merhof, Pai, Park, Perslev, Rezaiifar, Rippel, Sarasua, Shen, Son,
  Wachinger, Wang, Wang, Xia, Xu, Xu, Zheng, Simpson, Maier-Hein and
  Cardoso}]{antonelli2021medical}
\bibinfo{author}{Antonelli, M.}, \bibinfo{author}{Reinke, A.},
  \bibinfo{author}{Bakas, S.}, \bibinfo{author}{Farahani, K.},
  \bibinfo{author}{AnnetteKopp-Schneider}, \bibinfo{author}{Landman, B.A.},
  \bibinfo{author}{Litjens, G.}, \bibinfo{author}{Menze, B.},
  \bibinfo{author}{Ronneberger, O.}, \bibinfo{author}{Summers, R.M.},
  \bibinfo{author}{van Ginneken, B.}, \bibinfo{author}{Bilello, M.},
  \bibinfo{author}{Bilic, P.}, \bibinfo{author}{Christ, P.F.},
  \bibinfo{author}{Do, R.K.G.}, \bibinfo{author}{Gollub, M.J.},
  \bibinfo{author}{Heckers, S.H.}, \bibinfo{author}{Huisman, H.},
  \bibinfo{author}{Jarnagin, W.R.}, \bibinfo{author}{McHugo, M.K.},
  \bibinfo{author}{Napel, S.}, \bibinfo{author}{Pernicka, J.S.G.},
  \bibinfo{author}{Rhode, K.}, \bibinfo{author}{Tobon-Gomez, C.},
  \bibinfo{author}{Vorontsov, E.}, \bibinfo{author}{Huisman, H.},
  \bibinfo{author}{Meakin, J.A.}, \bibinfo{author}{Ourselin, S.},
  \bibinfo{author}{Wiesenfarth, M.}, \bibinfo{author}{Arbelaez, P.},
  \bibinfo{author}{Bae, B.}, \bibinfo{author}{Chen, S.}, \bibinfo{author}{Daza,
  L.}, \bibinfo{author}{Feng, J.}, \bibinfo{author}{He, B.},
  \bibinfo{author}{Isensee, F.}, \bibinfo{author}{Ji, Y.},
  \bibinfo{author}{Jia, F.}, \bibinfo{author}{Kim, N.}, \bibinfo{author}{Kim,
  I.}, \bibinfo{author}{Merhof, D.}, \bibinfo{author}{Pai, A.},
  \bibinfo{author}{Park, B.}, \bibinfo{author}{Perslev, M.},
  \bibinfo{author}{Rezaiifar, R.}, \bibinfo{author}{Rippel, O.},
  \bibinfo{author}{Sarasua, I.}, \bibinfo{author}{Shen, W.},
  \bibinfo{author}{Son, J.}, \bibinfo{author}{Wachinger, C.},
  \bibinfo{author}{Wang, L.}, \bibinfo{author}{Wang, Y.}, \bibinfo{author}{Xia,
  Y.}, \bibinfo{author}{Xu, D.}, \bibinfo{author}{Xu, Z.},
  \bibinfo{author}{Zheng, Y.}, \bibinfo{author}{Simpson, A.L.},
  \bibinfo{author}{Maier-Hein, L.}, \bibinfo{author}{Cardoso, M.J.},
  \bibinfo{year}{2021}.
\newblock \bibinfo{title}{The medical segmentation decathlon}.
\newblock \bibinfo{journal}{arXiv preprint arXiv:2106.05735} .
\bibitem[{Badrinarayanan et~al.(2017)Badrinarayanan, Kendall and
  Cipolla}]{Badrinarayanan2017SegNet}
\bibinfo{author}{Badrinarayanan, V.}, \bibinfo{author}{Kendall, A.},
  \bibinfo{author}{Cipolla, R.}, \bibinfo{year}{2017}.
\newblock \bibinfo{title}{Segnet: A deep convolutional encoder-decoder
  architecture for image segmentation}.
\newblock \bibinfo{journal}{IEEE Transactions on Pattern Analysis and Machine
  Intelligence} \bibinfo{volume}{39}, \bibinfo{pages}{2481--2495}.
\bibitem[{Bao et~al.(2021)Bao, Dong and Wei}]{beit2021BEiT}
\bibinfo{author}{Bao, H.}, \bibinfo{author}{Dong, L.}, \bibinfo{author}{Wei,
  F.}, \bibinfo{year}{2021}.
\newblock \bibinfo{title}{{BEiT}: {BERT} pre-training of image transformers}.
\newblock \bibinfo{journal}{arXiv preprint arXiv:2106.08254} .
\bibitem[{Bernal et~al.(2015)Bernal, S{\'a}nchez, Fern{\'a}ndez-Esparrach, Gil,
  Rodr{\'\i}guez and Vilari{\~n}o}]{BERNAL201599cvc}
\bibinfo{author}{Bernal, J.}, \bibinfo{author}{S{\'a}nchez, F.J.},
  \bibinfo{author}{Fern{\'a}ndez-Esparrach, G.}, \bibinfo{author}{Gil, D.},
  \bibinfo{author}{Rodr{\'\i}guez, C.}, \bibinfo{author}{Vilari{\~n}o, F.},
  \bibinfo{year}{2015}.
\newblock \bibinfo{title}{Wm-dova maps for accurate polyp highlighting in
  colonoscopy: Validation vs. saliency maps from physicians}.
\newblock \bibinfo{journal}{Computerized Medical Imaging and Graphics}
  \bibinfo{volume}{43}, \bibinfo{pages}{99--111}.
\bibitem[{Bertsekas(1995)}]{Bertsekas95}
\bibinfo{author}{Bertsekas, D.}, \bibinfo{year}{1995}.
\newblock \bibinfo{title}{Nonlinear Programming}.
\newblock \bibinfo{publisher}{Athena Scientific, Belmont, MA}.
\bibitem[{Boykov and Funka-Lea(2006)}]{BoykovF06GrapCut}
\bibinfo{author}{Boykov, Y.}, \bibinfo{author}{Funka-Lea, G.},
  \bibinfo{year}{2006}.
\newblock \bibinfo{title}{Graph cuts and efficient n-d image segmentation.}
\newblock \bibinfo{journal}{International Journal of Computer Vision}
  \bibinfo{volume}{70}, \bibinfo{pages}{109--131}.
\bibitem[{Chen et~al.(2018)Chen, Zhu, Papandreou, Schroff and
  Adam}]{ChenZPSA18deeplab}
\bibinfo{author}{Chen, L.C.}, \bibinfo{author}{Zhu, Y.},
  \bibinfo{author}{Papandreou, G.}, \bibinfo{author}{Schroff, F.},
  \bibinfo{author}{Adam, H.}, \bibinfo{year}{2018}.
\newblock \bibinfo{title}{Encoder-decoder with atrous separable convolution for
  semantic image segmentation}, in: \bibinfo{booktitle}{European Conference on
  Computer Vision}.
\bibitem[{Cheng et~al.(2023)Cheng, Ye, Deng, Chen, Li, Wang, Su, Huang, Chen,
  Sun, He, Zhang, Zhu and Qiao}]{cheng2023sammed2d}
\bibinfo{author}{Cheng, J.}, \bibinfo{author}{Ye, J.}, \bibinfo{author}{Deng,
  Z.}, \bibinfo{author}{Chen, J.}, \bibinfo{author}{Li, T.},
  \bibinfo{author}{Wang, H.}, \bibinfo{author}{Su, Y.}, \bibinfo{author}{Huang,
  Z.}, \bibinfo{author}{Chen, J.}, \bibinfo{author}{Sun, L.J.H.},
  \bibinfo{author}{He, J.}, \bibinfo{author}{Zhang, S.}, \bibinfo{author}{Zhu,
  M.}, \bibinfo{author}{Qiao, Y.}, \bibinfo{year}{2023}.
\newblock \bibinfo{title}{Sam-med2d}.
\bibitem[{Cordts et~al.(2016)Cordts, Omran, Ramos, Rehfeld, Enzweiler,
  Benenson, Franke, Roth and Schiele}]{Cordts2016Cityscapes}
\bibinfo{author}{Cordts, M.}, \bibinfo{author}{Omran, M.},
  \bibinfo{author}{Ramos, S.}, \bibinfo{author}{Rehfeld, T.},
  \bibinfo{author}{Enzweiler, M.}, \bibinfo{author}{Benenson, R.},
  \bibinfo{author}{Franke, U.}, \bibinfo{author}{Roth, S.},
  \bibinfo{author}{Schiele, B.}, \bibinfo{year}{2016}.
\newblock \bibinfo{title}{The cityscapes dataset for semantic urban scene
  understanding}, in: \bibinfo{booktitle}{IEEE/CVF Conference on Computer
  Vision and Pattern Recognition}.
\bibitem[{Dolz et~al.(2018)Dolz, Desrosiers and {Ben Ayed}}]{DOLZ2018segmri}
\bibinfo{author}{Dolz, J.}, \bibinfo{author}{Desrosiers, C.},
  \bibinfo{author}{{Ben Ayed}, I.}, \bibinfo{year}{2018}.
\newblock \bibinfo{title}{3d fully convolutional networks for subcortical
  segmentation in mri: A large-scale study}.
\newblock \bibinfo{journal}{NeuroImage} \bibinfo{volume}{170},
  \bibinfo{pages}{456--470}.
\bibitem[{Dong et~al.(2014)Dong, Chen, Yan and Yuille}]{Dong2014Semantic}
\bibinfo{author}{Dong, J.}, \bibinfo{author}{Chen, Q.}, \bibinfo{author}{Yan,
  S.}, \bibinfo{author}{Yuille, A.}, \bibinfo{year}{2014}.
\newblock \bibinfo{title}{Towards unified object detection and semantic
  segmentation}, in: \bibinfo{booktitle}{European Conference on Computer
  Vision}.
\bibitem[{Dosovitskiy et~al.(2021)Dosovitskiy, Beyer, Kolesnikov, Weissenborn,
  Zhai, Unterthiner, Dehghani, Minderer, Heigold, Gelly, Uszkoreit and
  Houlsby}]{dosovitskiy2020vit}
\bibinfo{author}{Dosovitskiy, A.}, \bibinfo{author}{Beyer, L.},
  \bibinfo{author}{Kolesnikov, A.}, \bibinfo{author}{Weissenborn, D.},
  \bibinfo{author}{Zhai, X.}, \bibinfo{author}{Unterthiner, T.},
  \bibinfo{author}{Dehghani, M.}, \bibinfo{author}{Minderer, M.},
  \bibinfo{author}{Heigold, G.}, \bibinfo{author}{Gelly, S.},
  \bibinfo{author}{Uszkoreit, J.}, \bibinfo{author}{Houlsby, N.},
  \bibinfo{year}{2021}.
\newblock \bibinfo{title}{An image is worth 16x16 words: Transformers for image
  recognition at scale}.
\bibitem[{Duque-Arias et~al.(2021)Duque-Arias, Velasco-Forero, Deschaud,
  Goulette, Serna, Decenci{\`e}re and Marcotegui}]{duquearias2021JaccardLoss}
\bibinfo{author}{Duque-Arias, D.}, \bibinfo{author}{Velasco-Forero, S.},
  \bibinfo{author}{Deschaud, J.E.}, \bibinfo{author}{Goulette, F.},
  \bibinfo{author}{Serna, A.}, \bibinfo{author}{Decenci{\`e}re, E.},
  \bibinfo{author}{Marcotegui, B.}, \bibinfo{year}{2021}.
\newblock \bibinfo{title}{{On power Jaccard losses for semantic segmentation}},
  in: \bibinfo{booktitle}{VISAPP}.
\bibitem[{{Eelbode} et~al.(2020){Eelbode}, {Bertels}, {Berman}, {Vandermeulen},
  {Maes}, {Bisschops} and {Blaschko}}]{Eelbode2020OptJaccard}
\bibinfo{author}{{Eelbode}, T.}, \bibinfo{author}{{Bertels}, J.},
  \bibinfo{author}{{Berman}, M.}, \bibinfo{author}{{Vandermeulen}, D.},
  \bibinfo{author}{{Maes}, F.}, \bibinfo{author}{{Bisschops}, R.},
  \bibinfo{author}{{Blaschko}, M.B.}, \bibinfo{year}{2020}.
\newblock \bibinfo{title}{Optimization for medical image segmentation: Theory
  and practice when evaluating with dice score or jaccard index}.
\newblock \bibinfo{journal}{IEEE Transactions on Medical Imaging}
  \bibinfo{volume}{39}, \bibinfo{pages}{3679--3690}.
\bibitem[{Fan et~al.(2020)Fan, Ji, Zhou, Chen, Fu, Shen and Shao}]{fan2020pra}
\bibinfo{author}{Fan, D.P.}, \bibinfo{author}{Ji, G.P.}, \bibinfo{author}{Zhou,
  T.}, \bibinfo{author}{Chen, G.}, \bibinfo{author}{Fu, H.},
  \bibinfo{author}{Shen, J.}, \bibinfo{author}{Shao, L.}, \bibinfo{year}{2020}.
\newblock \bibinfo{title}{Pranet: Parallel reverse attention network for polyp
  segmentation}, in: \bibinfo{booktitle}{International Conference on Medical
  Image Computing and Computer Assisted Intervention}.
\bibitem[{Galdran et~al.(2020)Galdran, Anjos, Dolz, Chakor, Lombaert and
  Ayed}]{galdran2020wnet}
\bibinfo{author}{Galdran, A.}, \bibinfo{author}{Anjos, A.},
  \bibinfo{author}{Dolz, J.}, \bibinfo{author}{Chakor, H.},
  \bibinfo{author}{Lombaert, H.}, \bibinfo{author}{Ayed, I.B.},
  \bibinfo{year}{2020}.
\newblock \bibinfo{title}{The little w-net that could: State-of-the-art retinal
  vessel segmentation with minimalistic models}.
\newblock \bibinfo{journal}{arXiv preprint arXiv:2009.01907} .
\bibitem[{Hatamizadeh et~al.(2022)Hatamizadeh, Tang, Nath, Yang, Myronenko,
  Landman, Roth and Xu}]{hatamizadeh2022unetr}
\bibinfo{author}{Hatamizadeh, A.}, \bibinfo{author}{Tang, Y.},
  \bibinfo{author}{Nath, V.}, \bibinfo{author}{Yang, D.},
  \bibinfo{author}{Myronenko, A.}, \bibinfo{author}{Landman, B.},
  \bibinfo{author}{Roth, H.R.}, \bibinfo{author}{Xu, D.}, \bibinfo{year}{2022}.
\newblock \bibinfo{title}{Unetr: Transformers for 3d medical image
  segmentation}, in: \bibinfo{booktitle}{Proceedings of the IEEE/CVF Winter
  Conference on Applications of Computer Vision}.
\bibitem[{{He} et~al.(2016){He}, {Zhang}, {Ren} and {Sun}}]{He2016resnet}
\bibinfo{author}{{He}, K.}, \bibinfo{author}{{Zhang}, X.},
  \bibinfo{author}{{Ren}, S.}, \bibinfo{author}{{Sun}, J.},
  \bibinfo{year}{2016}.
\newblock \bibinfo{title}{Deep residual learning for image recognition}, in:
  \bibinfo{booktitle}{IEEE/CVF Conference on Computer Vision and Pattern
  Recognition}.
\bibitem[{Isensee et~al.(2021)Isensee, Jaeger, Kohl, Petersen and
  Maier-Hein}]{Fabian2021nnUnet}
\bibinfo{author}{Isensee, F.}, \bibinfo{author}{Jaeger, P.F.},
  \bibinfo{author}{Kohl, S.A.}, \bibinfo{author}{Petersen, J.},
  \bibinfo{author}{Maier-Hein, K.H.}, \bibinfo{year}{2021}.
\newblock \bibinfo{title}{nnu-net: a self-configuring method for deep
  learning-based biomedical image segmentation}.
\newblock \bibinfo{journal}{Nature methods} \bibinfo{volume}{18},
  \bibinfo{pages}{203--211}.
\bibitem[{{Jha} et~al.(2020){Jha}, {Riegler}, {Johansen}, {Halvorsen} and
  {Johansen}}]{jha2020doubleunet}
\bibinfo{author}{{Jha}, D.}, \bibinfo{author}{{Riegler}, M.A.},
  \bibinfo{author}{{Johansen}, D.}, \bibinfo{author}{{Halvorsen}, P.},
  \bibinfo{author}{{Johansen}, H.D.}, \bibinfo{year}{2020}.
\newblock \bibinfo{title}{Doubleu-net: A deep convolutional neural network for
  medical image segmentation}, in: \bibinfo{booktitle}{International Symposium
  on Computer Based Medical Systems}.
\bibitem[{Jha et~al.(2020)Jha, Smedsrud, Riegler, Halvorsen, de~Lange, Johansen
  and Johansen}]{jha2020kvasir}
\bibinfo{author}{Jha, D.}, \bibinfo{author}{Smedsrud, P.H.},
  \bibinfo{author}{Riegler, M.A.}, \bibinfo{author}{Halvorsen, P.},
  \bibinfo{author}{de~Lange, T.}, \bibinfo{author}{Johansen, D.},
  \bibinfo{author}{Johansen, H.D.}, \bibinfo{year}{2020}.
\newblock \bibinfo{title}{Kvasir-seg: A segmented polyp dataset}, in:
  \bibinfo{booktitle}{International Conference on Multimedia Modeling}.
\bibitem[{Ji et~al.(2022)Ji, Bai, Yang, Ge, Zhu, Zhang, Li, Zhang, Ma, Wan
  et~al.}]{ji2022amos}
\bibinfo{author}{Ji, Y.}, \bibinfo{author}{Bai, H.}, \bibinfo{author}{Yang,
  J.}, \bibinfo{author}{Ge, C.}, \bibinfo{author}{Zhu, Y.},
  \bibinfo{author}{Zhang, R.}, \bibinfo{author}{Li, Z.},
  \bibinfo{author}{Zhang, L.}, \bibinfo{author}{Ma, W.}, \bibinfo{author}{Wan,
  X.}, et~al., \bibinfo{year}{2022}.
\newblock \bibinfo{title}{Amos: A large-scale abdominal multi-organ benchmark
  for versatile medical image segmentation}.
\newblock \bibinfo{journal}{arXiv preprint arXiv:2206.08023} .
\bibitem[{Kearns et~al.(1997)Kearns, Mansour and Ng}]{Kearns1998}
\bibinfo{author}{Kearns, M.J.}, \bibinfo{author}{Mansour, Y.},
  \bibinfo{author}{Ng, A.Y.}, \bibinfo{year}{1997}.
\newblock \bibinfo{title}{An information-theoretic analysis of hard and soft
  assignment methods for clustering}, in: \bibinfo{booktitle}{Conference on
  Uncertainty in Artificial Intelligence}.
\bibitem[{Kervadec et~al.(2021a)Kervadec, Bahig, Letourneau-Guillon, Dolz and
  Ayed}]{kervadec2021beyond}
\bibinfo{author}{Kervadec, H.}, \bibinfo{author}{Bahig, H.},
  \bibinfo{author}{Letourneau-Guillon, L.}, \bibinfo{author}{Dolz, J.},
  \bibinfo{author}{Ayed, I.B.}, \bibinfo{year}{2021}a.
\newblock \bibinfo{title}{Beyond pixel-wise supervision: semantic segmentation
  with higher-order shape descriptors}, in: \bibinfo{booktitle}{Medical Imaging
  with Deep Learning}.
\bibitem[{Kervadec et~al.(2021b)Kervadec, Bouchtiba, Desrosiers, Granger, Dolz
  and {Ben Ayed}}]{KERVADEC2021Boundary}
\bibinfo{author}{Kervadec, H.}, \bibinfo{author}{Bouchtiba, J.},
  \bibinfo{author}{Desrosiers, C.}, \bibinfo{author}{Granger, E.},
  \bibinfo{author}{Dolz, J.}, \bibinfo{author}{{Ben Ayed}, I.},
  \bibinfo{year}{2021}b.
\newblock \bibinfo{title}{Boundary loss for highly unbalanced segmentation}.
\newblock \bibinfo{journal}{Medical Image Analysis} \bibinfo{volume}{67},
  \bibinfo{pages}{101851}.
\bibitem[{Kim et~al.(2005)Kim, III, Yezzi, {\c{C}}etin and
  Willsky}]{KimFYCW05Nonparametric}
\bibinfo{author}{Kim, J.}, \bibinfo{author}{III, J.W.F.},
  \bibinfo{author}{Yezzi, A.J.}, \bibinfo{author}{{\c{C}}etin, M.},
  \bibinfo{author}{Willsky, A.S.}, \bibinfo{year}{2005}.
\newblock \bibinfo{title}{A nonparametric statistical method for image
  segmentation using information theory and curve evolution}.
\newblock \bibinfo{journal}{IEEE Transactions on Image Processing}
  \bibinfo{volume}{14}, \bibinfo{pages}{1486--1502}.
\bibitem[{Kirillov et~al.(2019)Kirillov, He, Girshick, Rother and
  Dollar}]{Kirillov2019Panoptic}
\bibinfo{author}{Kirillov, A.}, \bibinfo{author}{He, K.},
  \bibinfo{author}{Girshick, R.}, \bibinfo{author}{Rother, C.},
  \bibinfo{author}{Dollar, P.}, \bibinfo{year}{2019}.
\newblock \bibinfo{title}{Panoptic segmentation}, in:
  \bibinfo{booktitle}{IEEE/CVF Conference on Computer Vision and Pattern
  Recognition}.
\bibitem[{Kirillov et~al.(2023)Kirillov, Mintun, Ravi, Mao, Rolland, Gustafson,
  Xiao, Whitehead, Berg, Lo, Dollár and Girshick}]{kirillov2023segment}
\bibinfo{author}{Kirillov, A.}, \bibinfo{author}{Mintun, E.},
  \bibinfo{author}{Ravi, N.}, \bibinfo{author}{Mao, H.},
  \bibinfo{author}{Rolland, C.}, \bibinfo{author}{Gustafson, L.},
  \bibinfo{author}{Xiao, T.}, \bibinfo{author}{Whitehead, S.},
  \bibinfo{author}{Berg, A.C.}, \bibinfo{author}{Lo, W.Y.},
  \bibinfo{author}{Dollár, P.}, \bibinfo{author}{Girshick, R.},
  \bibinfo{year}{2023}.
\newblock \bibinfo{title}{Segment anything}.
\newblock \bibinfo{journal}{arXiv preprint arXiv:2304.02643} .
\bibitem[{Kofler et~al.(2022)Kofler, Shit, Ezhov, Fidon, Al-Maskari, Li,
  Bhatia, Loehr, Piraud, Erturk, Kirschke, Peeken, Vercauteren, Zimmer,
  Wiestler and Menze}]{Kofler2022Blob}
\bibinfo{author}{Kofler, F.}, \bibinfo{author}{Shit, S.},
  \bibinfo{author}{Ezhov, I.}, \bibinfo{author}{Fidon, L.},
  \bibinfo{author}{Al-Maskari, R.}, \bibinfo{author}{Li, H.},
  \bibinfo{author}{Bhatia, H.S.}, \bibinfo{author}{Loehr, T.},
  \bibinfo{author}{Piraud, M.}, \bibinfo{author}{Erturk, A.},
  \bibinfo{author}{Kirschke, J.S.}, \bibinfo{author}{Peeken, J.C.},
  \bibinfo{author}{Vercauteren, T.K.M.}, \bibinfo{author}{Zimmer, C.},
  \bibinfo{author}{Wiestler, B.}, \bibinfo{author}{Menze, B.H.},
  \bibinfo{year}{2022}.
\newblock \bibinfo{title}{Blob loss: Instance imbalance aware loss functions
  for semantic segmentation}, in: \bibinfo{booktitle}{Information Processing in
  Medical Imaging}.
\bibitem[{Leng et~al.(2022)Leng, Tan, Liu, Cubuk, Shi, Cheng and
  Anguelov}]{leng2022polyloss}
\bibinfo{author}{Leng, Z.}, \bibinfo{author}{Tan, M.}, \bibinfo{author}{Liu,
  C.}, \bibinfo{author}{Cubuk, E.D.}, \bibinfo{author}{Shi, J.},
  \bibinfo{author}{Cheng, S.}, \bibinfo{author}{Anguelov, D.},
  \bibinfo{year}{2022}.
\newblock \bibinfo{title}{Polyloss: A polynomial expansion perspective of
  classification loss functions}, in: \bibinfo{booktitle}{International
  Conference on Learning Representations}.
\bibitem[{Lin et~al.(2017)Lin, Goyal, Girshick, He and
  Doll{\'{a}}r}]{Lin2017FocalLoss}
\bibinfo{author}{Lin, T.}, \bibinfo{author}{Goyal, P.},
  \bibinfo{author}{Girshick, R.B.}, \bibinfo{author}{He, K.},
  \bibinfo{author}{Doll{\'{a}}r, P.}, \bibinfo{year}{2017}.
\newblock \bibinfo{title}{Focal loss for dense object detection}, in:
  \bibinfo{booktitle}{International Conference on Computer Vision}.
\bibitem[{Lin et~al.(2014)Lin, Maire, Belongie, Bourdev, Girshick, Hays,
  Perona, Ramanan, Doll{\'{a}}r and Zitnick}]{Lin14Coco}
\bibinfo{author}{Lin, T.}, \bibinfo{author}{Maire, M.},
  \bibinfo{author}{Belongie, S.J.}, \bibinfo{author}{Bourdev, L.D.},
  \bibinfo{author}{Girshick, R.B.}, \bibinfo{author}{Hays, J.},
  \bibinfo{author}{Perona, P.}, \bibinfo{author}{Ramanan, D.},
  \bibinfo{author}{Doll{\'{a}}r, P.}, \bibinfo{author}{Zitnick, C.L.},
  \bibinfo{year}{2014}.
\newblock \bibinfo{title}{Microsoft {COCO:} common objects in context}.
\newblock \bibinfo{journal}{arXiv preprint arXiv:1405.0312} .
\bibitem[{Litjens et~al.(2017)Litjens, Kooi, Bejnordi, Setio, Ciompi,
  Ghafoorian, van~der Laak, van Ginneken and Sánchez}]{Litjens2017survey}
\bibinfo{author}{Litjens, G.}, \bibinfo{author}{Kooi, T.},
  \bibinfo{author}{Bejnordi, B.E.}, \bibinfo{author}{Setio, A.A.A.},
  \bibinfo{author}{Ciompi, F.}, \bibinfo{author}{Ghafoorian, M.},
  \bibinfo{author}{van~der Laak, J.A.W.M.}, \bibinfo{author}{van Ginneken, B.},
  \bibinfo{author}{Sánchez, C.I.}, \bibinfo{year}{2017}.
\newblock \bibinfo{title}{A survey on deep learning in medical image analysis.}
\newblock \bibinfo{journal}{Medical Image Analysis} \bibinfo{volume}{42},
  \bibinfo{pages}{60--88}.
\bibitem[{Ma et~al.(2021)Ma, Chen, Ng, Huang, Li, Li, Yang and
  Martel}]{ma2021loss}
\bibinfo{author}{Ma, J.}, \bibinfo{author}{Chen, J.}, \bibinfo{author}{Ng, M.},
  \bibinfo{author}{Huang, R.}, \bibinfo{author}{Li, Y.}, \bibinfo{author}{Li,
  C.}, \bibinfo{author}{Yang, X.}, \bibinfo{author}{Martel, A.L.},
  \bibinfo{year}{2021}.
\newblock \bibinfo{title}{Loss odyssey in medical image segmentation}.
\newblock \bibinfo{journal}{Medical Image Analysis} \bibinfo{volume}{71},
  \bibinfo{pages}{102035}.
\bibitem[{Milletari et~al.(2016)Milletari, Navab and Ahmadi}]{vnet2016}
\bibinfo{author}{Milletari, F.}, \bibinfo{author}{Navab, N.},
  \bibinfo{author}{Ahmadi, S.}, \bibinfo{year}{2016}.
\newblock \bibinfo{title}{V-net: Fully convolutional neural networks for
  volumetric medical image segmentation}, in: \bibinfo{booktitle}{International
  Conference on 3D Vision}.
\bibitem[{Mohseni~Salehi et~al.(2017)Mohseni~Salehi, Erdogmus and
  Gholipour}]{salehi2017tversky}
\bibinfo{author}{Mohseni~Salehi, S.S.}, \bibinfo{author}{Erdogmus, D.},
  \bibinfo{author}{Gholipour, A.}, \bibinfo{year}{2017}.
\newblock \bibinfo{title}{Tversky loss function for image segmentation using 3d
  fully convolutional deep networks}, in: \bibinfo{booktitle}{International
  Workshop on Machine Learning in Medical Imaging}.
\bibitem[{Nikolov et~al.(2018)Nikolov, Blackwell, Mendes, Fauw, Meyer, Hughes,
  Askham, Romera{-}Paredes, Karthikesalingam, Chu, Carnell, Boon, D'Souza,
  Moinuddin, Sullivan, Consortium, Montgomery, Rees, Sharma, Suleyman, Back,
  Ledsam and Ronneberger}]{Stanislav2018clinic}
\bibinfo{author}{Nikolov, S.}, \bibinfo{author}{Blackwell, S.},
  \bibinfo{author}{Mendes, R.}, \bibinfo{author}{Fauw, J.D.},
  \bibinfo{author}{Meyer, C.}, \bibinfo{author}{Hughes, C.},
  \bibinfo{author}{Askham, H.}, \bibinfo{author}{Romera{-}Paredes, B.},
  \bibinfo{author}{Karthikesalingam, A.}, \bibinfo{author}{Chu, C.},
  \bibinfo{author}{Carnell, D.}, \bibinfo{author}{Boon, C.},
  \bibinfo{author}{D'Souza, D.}, \bibinfo{author}{Moinuddin, S.A.},
  \bibinfo{author}{Sullivan, K.}, \bibinfo{author}{Consortium, D.R.},
  \bibinfo{author}{Montgomery, H.}, \bibinfo{author}{Rees, G.},
  \bibinfo{author}{Sharma, R.}, \bibinfo{author}{Suleyman, M.},
  \bibinfo{author}{Back, T.}, \bibinfo{author}{Ledsam, J.R.},
  \bibinfo{author}{Ronneberger, O.}, \bibinfo{year}{2018}.
\newblock \bibinfo{title}{Deep learning to achieve clinically applicable
  segmentation of head and neck anatomy for radiotherapy}.
\newblock \bibinfo{journal}{arXiv preprint arXiv:1809.04430} .
\bibitem[{Ronneberger et~al.(2015)Ronneberger, Fischer and Brox}]{Ronn2015Unet}
\bibinfo{author}{Ronneberger, O.}, \bibinfo{author}{Fischer, P.},
  \bibinfo{author}{Brox, T.}, \bibinfo{year}{2015}.
\newblock \bibinfo{title}{U-net: Convolutional networks for biomedical image
  segmentation}, in: \bibinfo{booktitle}{International Conference on Medical
  Image Computing and Computer Assisted Intervention}.
\bibitem[{Russakovsky et~al.(2015)Russakovsky, Deng, Su, Krause, Satheesh, Ma,
  Huang, Karpathy, Khosla, Bernstein, Berg and Fei-Fei}]{ILSVRC15}
\bibinfo{author}{Russakovsky, O.}, \bibinfo{author}{Deng, J.},
  \bibinfo{author}{Su, H.}, \bibinfo{author}{Krause, J.},
  \bibinfo{author}{Satheesh, S.}, \bibinfo{author}{Ma, S.},
  \bibinfo{author}{Huang, Z.}, \bibinfo{author}{Karpathy, A.},
  \bibinfo{author}{Khosla, A.}, \bibinfo{author}{Bernstein, M.},
  \bibinfo{author}{Berg, A.C.}, \bibinfo{author}{Fei-Fei, L.},
  \bibinfo{year}{2015}.
\newblock \bibinfo{title}{{ImageNet Large Scale Visual Recognition Challenge}}.
\newblock \bibinfo{journal}{International Journal of Computer Vision (IJCV)}
  \bibinfo{volume}{115}, \bibinfo{pages}{211--252}.
\bibitem[{Shit et~al.(2021)Shit, Paetzold, Sekuboyina, Ezhov, Unger, Zhylka,
  Pluim, Bauer and Menze}]{cldice2021}
\bibinfo{author}{Shit, S.}, \bibinfo{author}{Paetzold, J.C.},
  \bibinfo{author}{Sekuboyina, A.}, \bibinfo{author}{Ezhov, I.},
  \bibinfo{author}{Unger, A.}, \bibinfo{author}{Zhylka, A.},
  \bibinfo{author}{Pluim, J.P.}, \bibinfo{author}{Bauer, U.},
  \bibinfo{author}{Menze, B.H.}, \bibinfo{year}{2021}.
\newblock \bibinfo{title}{cldice-a novel topology-preserving loss function for
  tubular structure segmentation}, in: \bibinfo{booktitle}{Proceedings of the
  IEEE/CVF Conference on Computer Vision and Pattern Recognition}.
\bibitem[{Sudre et~al.(2017)Sudre, Li, Vercauteren, Ourselin and
  Cardoso}]{Carole2017GDice}
\bibinfo{author}{Sudre, C.}, \bibinfo{author}{Li, W.},
  \bibinfo{author}{Vercauteren, T.}, \bibinfo{author}{Ourselin, S.},
  \bibinfo{author}{Cardoso, M.J.}, \bibinfo{year}{2017}.
\newblock \bibinfo{title}{Generalised dice overlap as a deep learning loss
  function for highly unbalanced segmentations}, in: \bibinfo{booktitle}{Deep
  Learning in Medical Image Analysis and Multimodal Learning for Clinical
  Decision Support}, pp. \bibinfo{pages}{240--248}.
\bibitem[{Taghanaki et~al.(2019)Taghanaki, Zheng, Zhou, Georgescu, Sharma, Xu,
  Comaniciu and Hamarneh}]{taghanaki2019combo}
\bibinfo{author}{Taghanaki, S.A.}, \bibinfo{author}{Zheng, Y.},
  \bibinfo{author}{Zhou, S.K.}, \bibinfo{author}{Georgescu, B.},
  \bibinfo{author}{Sharma, P.}, \bibinfo{author}{Xu, D.},
  \bibinfo{author}{Comaniciu, D.}, \bibinfo{author}{Hamarneh, G.},
  \bibinfo{year}{2019}.
\newblock \bibinfo{title}{Combo loss: Handling input and output imbalance in
  multi-organ segmentation}.
\newblock \bibinfo{journal}{Computerized Medical Imaging and Graphics}
  \bibinfo{volume}{75}, \bibinfo{pages}{24--33}.
\bibitem[{Tang et~al.(2016)Tang, Marin, Ayed and Boykov}]{Tang2016MRF}
\bibinfo{author}{Tang, M.}, \bibinfo{author}{Marin, D.}, \bibinfo{author}{Ayed,
  I.B.}, \bibinfo{author}{Boykov, Y.}, \bibinfo{year}{2016}.
\newblock \bibinfo{title}{Normalized cut meets mrf}, in:
  \bibinfo{editor}{Leibe, B.}, \bibinfo{editor}{Matas, J.},
  \bibinfo{editor}{Sebe, N.}, \bibinfo{editor}{Welling, M.} (Eds.),
  \bibinfo{booktitle}{European Conference on Computer Vision}.
\bibitem[{Tang et~al.(2019)Tang, Marin, Ayed and Boykov}]{TangMAB19Kernel}
\bibinfo{author}{Tang, M.}, \bibinfo{author}{Marin, D.}, \bibinfo{author}{Ayed,
  I.B.}, \bibinfo{author}{Boykov, Y.}, \bibinfo{year}{2019}.
\newblock \bibinfo{title}{Kernel cuts: Kernel and spectral clustering meet
  regularization.}
\newblock \bibinfo{journal}{International Journal of Computer Vision}
  \bibinfo{volume}{127}, \bibinfo{pages}{477--511}.
\bibitem[{Tao et~al.(2020)Tao, Sapra and Catanzaro}]{tao2020hierarchical}
\bibinfo{author}{Tao, A.}, \bibinfo{author}{Sapra, K.},
  \bibinfo{author}{Catanzaro, B.}, \bibinfo{year}{2020}.
\newblock \bibinfo{title}{Hierarchical multi-scale attention for semantic
  segmentation}.
\newblock \bibinfo{journal}{arXiv preprint arXiv:2005.10821} .
\bibitem[{Vaswani et~al.(2017)Vaswani, Shazeer, Parmar, Uszkoreit, Jones,
  Gomez, Kaiser and Polosukhin}]{Vaswani2017transformer}
\bibinfo{author}{Vaswani, A.}, \bibinfo{author}{Shazeer, N.},
  \bibinfo{author}{Parmar, N.}, \bibinfo{author}{Uszkoreit, J.},
  \bibinfo{author}{Jones, L.}, \bibinfo{author}{Gomez, A.N.},
  \bibinfo{author}{Kaiser, L.u.}, \bibinfo{author}{Polosukhin, I.},
  \bibinfo{year}{2017}.
\newblock \bibinfo{title}{Attention is all you need}, in:
  \bibinfo{booktitle}{Advances in Neural Information Processing Systems}.
\bibitem[{Wang et~al.(2023)Wang, Popordanoska, Bertels, Lemmens and
  Blaschko}]{wang2023dice}
\bibinfo{author}{Wang, Z.}, \bibinfo{author}{Popordanoska, T.},
  \bibinfo{author}{Bertels, J.}, \bibinfo{author}{Lemmens, R.},
  \bibinfo{author}{Blaschko, M.B.}, \bibinfo{year}{2023}.
\newblock \bibinfo{title}{Dice semimetric losses: Optimizing the dice score
  with soft labels}, in: \bibinfo{booktitle}{International Conference on
  Medical Image Computing and Computer Assisted Intervention,}.
\bibitem[{Wei et~al.(2020)Wei, Li, Yu, Zhang, Zhang, Hu, Mo, Gong, Chen, Ding
  and Chen}]{Wei2020RetinalLesion}
\bibinfo{author}{Wei, Q.}, \bibinfo{author}{Li, X.}, \bibinfo{author}{Yu, W.},
  \bibinfo{author}{Zhang, X.}, \bibinfo{author}{Zhang, Y.},
  \bibinfo{author}{Hu, B.}, \bibinfo{author}{Mo, B.}, \bibinfo{author}{Gong,
  D.}, \bibinfo{author}{Chen, N.}, \bibinfo{author}{Ding, D.},
  \bibinfo{author}{Chen, Y.}, \bibinfo{year}{2020}.
\newblock \bibinfo{title}{Learn to segment retinal lesions and beyond}, in:
  \bibinfo{booktitle}{International Conference on Pattern Recognition}.
\bibitem[{Wojna et~al.(2019)Wojna, Ferrari, Guadarrama, Silberman, chieh Chen,
  Fathi and Uijlings}]{Zbigniew2018Devil}
\bibinfo{author}{Wojna, Z.}, \bibinfo{author}{Ferrari, V.},
  \bibinfo{author}{Guadarrama, S.}, \bibinfo{author}{Silberman, N.},
  \bibinfo{author}{chieh Chen, L.}, \bibinfo{author}{Fathi, A.},
  \bibinfo{author}{Uijlings, J.}, \bibinfo{year}{2019}.
\newblock \bibinfo{title}{The devil is in the decoder: Classification,
  regression and gans}.
\newblock \bibinfo{journal}{International Journal of Computer Vision} .
\bibitem[{Wong et~al.(2018)Wong, Moradi, Tang and
  Syeda{-}Mahmood}]{WongMTS18logdice}
\bibinfo{author}{Wong, K.C.L.}, \bibinfo{author}{Moradi, M.},
  \bibinfo{author}{Tang, H.}, \bibinfo{author}{Syeda{-}Mahmood, T.F.},
  \bibinfo{year}{2018}.
\newblock \bibinfo{title}{3d segmentation with exponential logarithmic loss for
  highly unbalanced object sizes}, in: \bibinfo{booktitle}{International
  Conference on Medical Image Computing and Computer Assisted Intervention}.
\bibitem[{Wu et~al.(2016)Wu, Shen and van~den Hengel}]{WuSH16topk}
\bibinfo{author}{Wu, Z.}, \bibinfo{author}{Shen, C.}, \bibinfo{author}{van~den
  Hengel, A.}, \bibinfo{year}{2016}.
\newblock \bibinfo{title}{Bridging category-level and instance-level semantic
  image segmentation}.
\newblock \bibinfo{journal}{arXiv preprint arXiv:1605.06885} .
\bibitem[{Yeung et~al.(2021)Yeung, Sala, Sch{\"{o}}nlieb and
  Rundo}]{yeung2021mixed}
\bibinfo{author}{Yeung, M.}, \bibinfo{author}{Sala, E.},
  \bibinfo{author}{Sch{\"{o}}nlieb, C.}, \bibinfo{author}{Rundo, L.},
  \bibinfo{year}{2021}.
\newblock \bibinfo{title}{A mixed focal loss function for handling class
  imbalanced medical image segmentation}.
\newblock \bibinfo{journal}{arXiv preprint arXiv:2102.04525} .
\bibitem[{Yuan et~al.(2020)Yuan, Chen and Wang}]{Yuan2020OCR}
\bibinfo{author}{Yuan, Y.}, \bibinfo{author}{Chen, X.}, \bibinfo{author}{Wang,
  J.}, \bibinfo{year}{2020}.
\newblock \bibinfo{title}{Object-contextual representations for semantic
  segmentation}, in: \bibinfo{booktitle}{European Conference on Computer
  Vision}.
\bibitem[{Yue et~al.(2019)Yue, Luo, Ye, Xu and Zhuang}]{Yue2019CardiacSeg}
\bibinfo{author}{Yue, Q.}, \bibinfo{author}{Luo, X.}, \bibinfo{author}{Ye, Q.},
  \bibinfo{author}{Xu, L.}, \bibinfo{author}{Zhuang, X.}, \bibinfo{year}{2019}.
\newblock \bibinfo{title}{Cardiac segmentation from {LGE} {MRI} using deep
  neural network incorporating shape and spatial priors}, in:
  \bibinfo{editor}{Shen, D.}, \bibinfo{editor}{Liu, T.},
  \bibinfo{editor}{Peters, T.M.}, \bibinfo{editor}{Staib, L.H.},
  \bibinfo{editor}{Essert, C.}, \bibinfo{editor}{Zhou, S.},
  \bibinfo{editor}{Yap, P.}, \bibinfo{editor}{Khan, A.R.} (Eds.),
  \bibinfo{booktitle}{International Conference on Medical Image Computing and
  Computer Assisted Intervention}.
\bibitem[{Zhao et~al.(2017)Zhao, Shi, Qi, Wang and Jia}]{zhao2017pspnet}
\bibinfo{author}{Zhao, H.}, \bibinfo{author}{Shi, J.}, \bibinfo{author}{Qi,
  X.}, \bibinfo{author}{Wang, X.}, \bibinfo{author}{Jia, J.},
  \bibinfo{year}{2017}.
\newblock \bibinfo{title}{Pyramid scene parsing network}, in:
  \bibinfo{booktitle}{IEEE/CVF Conference on Computer Vision and Pattern
  Recognition}.
\bibitem[{Zhao et~al.(2018)Zhao, Zhang, Liu, Shi, Change~Loy, Lin and
  Jia}]{zhao2018psanet}
\bibinfo{author}{Zhao, H.}, \bibinfo{author}{Zhang, Y.}, \bibinfo{author}{Liu,
  S.}, \bibinfo{author}{Shi, J.}, \bibinfo{author}{Change~Loy, C.},
  \bibinfo{author}{Lin, D.}, \bibinfo{author}{Jia, J.}, \bibinfo{year}{2018}.
\newblock \bibinfo{title}{Psanet: Point-wise spatial attention network for
  scene parsing}, in: \bibinfo{booktitle}{European Conference on Computer
  Vision}.

\end{thebibliography}

\clearpage
\appendix

\section{Proofs}

\rev{
\subsection{Proposition 1}
\label{appendix:prop1}

\begin{proof}
According to Jensen's Inequality, for a convex function $f$, we have:
\begin{equation}
\label{eq:jenson}
f(\frac{1}{n}\sum_{i=1}^{n} x_i) \leq \frac{1}{n}\sum_{i=1}^{n} f(x_i)
\end{equation}
where $x_i$ is a random variable, and $n$ is a positive integer.

Then given the convexity of function $-\log(x)$, we obtain:
\begin{align}
\label{eq:df-bound-ce}
DF &= - \sum_{k=1}^K \log\left (\frac{1}{|\mathbf{\Omega}_k|}\sum_{i\in \mathbf{\Omega}_k} p_{ik} \right )\nonumber\\
&\leq - \sum_{k=1}^K \frac{1}{|\mathbf{\Omega}_k|} \sum_{i\in \mathbf{\Omega}_k} \log(p_{ik})\nonumber\\
&= CE
\end{align}

\end{proof}

}

\subsection{Proposition 2}
\label{appendix:prop2}

\begin{proof}
Let us write the region-size bias term in the logarithmic Dice as a vector-valued function of probability simplex vector $\mathbf{p}$:
\begin{equation}
\label{region-size-bias-Dice-appendix}
g(\mathbf{p}) = \mathbf{1}_{K}^\top \log (\mathbf{p} + \mathbf{y}) 
\quad \mbox{for} \quad \mathbf{p} = \left (\hat{p}_{k} \right )_{1 \leq k \leq K} \in \Delta_K     
\end{equation}
where symbol $\top$ denotes transpose and $\mathbf{1}_{K}$ is the $K$-dimensional vector of ones. Function $g$ is concave because
its Hessian is a negative semi-definite matrix: 
The Hessian of $g$ is a diagonal matrix whose diagonal elements are given by $-\frac{1}{\hat{p}_{k}^2}$ and, hence, are all 
non-positive. Therefore, using Jensen's inequality and the fact that $\mathbf{p}$ is within the simplex, we have 
the following lower bound on penalty $g$ in Eq. (\ref{region-size-bias-Dice-appendix}):
\begin{equation}
\label{upper-bound-Dice-appendix}
g(\mathbf{p}) = g\left(\sum_{k=1}^K \hat{p}_{k} \mathbf{e}_k\right) 
\geq 
\sum_{k=1}^K \hat{p}_{k}  g(\mathbf{e}_k) 
\end{equation}
where $\mathbf{e}_k \in \{0, 1\}^K$  denote the $k$-th vertex of the simplex: the $k$-th component of  
$\mathbf{e}_k$ is equal to $1$ while the other components are all equal to $0$. 
Now, recall the definition of simplex vector $\mathbf{t}=\left (\hat{t}_{j} \right )_{1 \leq j \leq K}$: $\hat{t}_{j} = 1$ when $\hat{y}_{j} = \max_{\tiny{1 \leq k \leq K}} \hat{y}_{k}$ and $\hat{t}_{j} = 0$ otherwise. 
Given this definition, one could easily verify the following fact: 
\begin{equation}
\label{upper-bound-Dice-appendix-2}
g(\mathbf{e}_k) \geq g(\mathbf{t}) \quad \forall k 
\end{equation}
To see this, let $j$ denotes the integer verifying $\hat{y}_{j} = \max_{\tiny{1 \leq k \leq K}} \hat{y}_{k}$ and let $k \neq j$. Then, we have: 
\begin{equation}
g(\mathbf{e}_k) - g(\mathbf{t}) = \log\left (1+\frac{1}{\hat{y}_{k}} \right ) - \log\left (1+\frac{1}{\hat{y}_{j}} \right ) \geq 0. 
\end{equation}
This is due to the fact that function $\log \left (1+\frac{1}{x} \right )$ is monotonically decreasing in $[0, 1]$ and $\hat{y}_{k} \leq \hat{y}_{j}$. 
Now, combining inequalities (\ref{upper-bound-Dice-appendix}) and (\ref{upper-bound-Dice-appendix-2}), and using the fact that $\sum_{k=1}^K \hat{p}_{k}$=1, we obtain:
\begin{equation}
g(\mathbf{p}) \geq g(\mathbf{t}) 
\end{equation}
\end{proof}

\subsection{Proposition 3}
\label{appendix:prop3}

\begin{proof}
Considering a generative view of the prediction model, random variable ${\cal F}$ associated with the learned features is continuous, while the random variable describing the labels, \textit{i.e.}, ${\cal K}$, takes its possible values in a finite set $\{1,\ldots,K\}$.
Then, the region-size distribution of the labels could be empirically estimated by the GT proportion of each segmentation region (as listed in Table 1 of the main text) \citep{KimFYCW05Nonparametric} :
\begin{equation}
    \mathbb{P} ({\cal K} = k) \approx \hat{y}_{k} = \frac{|\mathbf{\Omega}_k|}{|\mathbf{\Omega}|}
\end{equation}
Also, we express the conditional entropy of the learned features as follows :
\begin{align}
\label{appendix:eq:cond-entropy-feature}
    {\cal H}({\cal F}|{\cal K}) &= \sum_k^K \mathbb{P} ({\cal K} = k) {\cal H}({\cal F}|{\cal K} = k) \nonumber\\
    &\approx \frac{1}{|\mathbf{\Omega}|} \sum_k^K |\mathbf{\Omega}_k| {\cal H}({\cal F}|{\cal K} = k)
\end{align}
with each ${\cal H}({\cal F}|{\cal K} = k)$ given by :
\begin{equation}
\label{appendix:eq:cond-entropy}
    {\cal H}({\cal F}|{\cal K} = k) = - \int_{\mathbf{f}^\theta} \mathbb{P}(\mathbf{f}^\theta|{\cal K} = k)\log \mathbb{P}(\mathbf{f}^\theta|{\cal K} = k)d\mathbf{f}^\theta
\end{equation}
Hereafter, for notation simplicity, we omit ${\cal K}$ and use ${\cal H}({\cal F}|k)$ instead of ${\cal H}({\cal F}|{\cal K} = k)$. Also, we use $\mathbb{P}(\mathbf{f}^\theta| k)$ instead of $\mathbb{P}(\mathbf{f}^\theta|{\cal K} = k)$.

To estimate the conditional entropy in Eq. (\ref{appendix:eq:cond-entropy}), let us refer to the following well known Monte-Carlo estimation \citep{Kearns1998,TangMAB19Kernel} :

\textbf{Monte-Carlo estimation.} For any discrete set of points $\mathbf{S} \subset \mathbf{\Omega}$, any function g and any feature embedding $\mathbf{f}$, we have :
\begin{equation}
\label{appendix:eq:monte-carlo}
    \int_{\mathbf{f}} \text{g}(\mathbf{f})\mathbb{P}(\mathbf{f}|\mathbf{S}) \approx \frac{1}{|\mathbf{S}|} \sum_{i\in \mathbf{S}} \text{g}(\mathbf{f}_i)
\end{equation}
where $\mathbf{f}_i$ denotes a feature vector at point $i$, and $\mathbb{P}(\mathbf{f}|\mathbf{S})$ stands for the density of $\{\mathbf{f}_i,i\in \mathbf{S}\}$.

Therefore, applying Montre-Carlo to ${\cal H}({\cal F}|k)$ in Eq. (\ref{appendix:eq:cond-entropy}), we can re-write Eq. (\ref{appendix:eq:cond-entropy-feature}) as follows :
\begin{align}
    {\cal H}({\cal F}|{\cal K}) \approx - \frac{1}{|\mathbf{\Omega}|} \sum_k^K \sum_{i \in \mathbf{\Omega}_k} \log(\mathbb{P}(\mathbf{f}_i^{\theta}|k))
\end{align}

Furthermore, using Bayes rule $\mathbb{P}(\mathbf{f}_i^{\theta}|k) \propto \frac{p_{ik}}{\hat{p}_k}$, in addition to the fact that $\sum_{i \in \mathbf{\Omega}_k} \log(\hat{p}_k) = |\mathbf{\Omega}_k| \log(\hat{p}_k)$, we obtain :
\begin{align}
    {\cal H}({\cal F}|{\cal K}) &\approx - \frac{1}{|\mathbf{\Omega}|} \sum_k^K \sum_{i \in \mathbf{\Omega}_k} \log \left (\frac{p_{ik}}{\hat{p}_k} \right ) \nonumber\\
    &= - \frac{1}{|\mathbf{\Omega}|} \sum_k^K \sum_{i \in \mathbf{\Omega}_k} \log\left(p_{ik}\right) + \frac{1}{|\mathbf{\Omega}|} \sum_k^K \sum_{i \in \mathbf{\Omega}_k} \log\left({\hat{p}_k}\right) \nonumber\\
    &= \text{CE} + \frac{1}{|\mathbf{\Omega}|} \sum_k^K |\mathbf{\Omega}_k| \log(\hat{p}_k) \nonumber\\
    &= \text{CE} + \sum_k^K \hat{y}_k \log(\hat{p}_k)
\end{align}

Finally, due to the definition of the region-size KL divergence, we have :
\begin{align}
{\cal D}_\text{KL}(\mathbf{{y}}||\mathbf{p}) =  \sum_{k=1}^{K} \hat{y}_{k} \log \left ( \frac{\hat{y}_{k}}{\hat{p}_{k}} \right ) \ceq - \sum_k^K \hat{y}_k \log(\hat{p}_k )
\end{align}
This yields : 

\begin{align}
    \text{CE} \ceq \cal{H}(\cal{F}|{\cal{K}}) + {\cal D}_\text{KL}({\mathbf y} || \mathbf{p})
\end{align}

In summary, we give an information-theoretic prospective of CE.
The entropy term can be considered as a ground-truth matching term, while the region-size KL term avoids trivial solutions and encourages the proportions of the predicted segmentation regions to match the ground-truth proportions.

\end{proof}

\section{The binary segmentation case}
\label{appendix:binary}

In the two-class (binary) segmentation case, Dice might be used for the foreground region only \citep{vnet2016}. 
Similarly to the multi-class case discussed in the paper, a single Dice term also decomposes into a ground-truth matching term and region-size penalty, with the latter encouraging extremely imbalanced binary segmentations. For this specific case, the logarithmic Dice and CE could be written as summations over the foreground 
and background segmentation regions:
\begin{equation}
\small
\label{eq:logdice}
-\log(\text{Dice}_1) \ceq \underbrace{-\log\left (\frac{1}{|\mathbf{\Omega}_1|}\sum_{i\in \mathbf{\Omega}_1} p_{i1} \right )}_{\text{Foreground matching: DF}_1} + \underbrace{\log\left ( \sum_{i\in \mathbf{\Omega}} p_{i1} + |\mathbf{\Omega}_1| \right )}_{\text{region-size bias: DB}_1}
\end{equation}
\begin{equation}
\label{eq:bince}
\text{CE} = \underbrace{- \frac{1}{|\mathbf{\Omega}_1|} \sum_{i\in \mathbf{\Omega}_1} \log p_{i1} }_{\text{Foreground matching: $\text{CE}_1$}} - \underbrace{\frac{1}{|\mathbf{\Omega}_2|} \sum_{i\in \mathbf{\Omega}_2} \log (1 - p_{i1}) }_{\text{Background matching: $\text{CE}_2$}}
\end{equation}
In Eq. (\ref{eq:logdice}), the term DB$_1$ can be expressed, up to an additive constant, as a function of the region-size probability of the foreground class ($k=1$) as follows:
\begin{equation}
\label{eq:dicebias-binary}
\text{DB}_1 \ceq \log (\hat{p}_{1} + \hat{y}_{1})  
\end{equation}
Clearly, the region-size probability $\hat{p}_{1}$ measures the predicted proportion of pixels within the foreground region. This term reaches 
its minimum when the foreground region is empty ($p_{i1}=0 \, \forall i$). Therefore, since $\log$ is monotonically increasing, minimizing 
term DB$_1$ in Eq. (\ref{eq:logdice}) introduces a bias preferring small foreground structures. Note that this region-size penalty in the 
logarithmic Dice loss is important to avoid trivial solutions: when using the foreground-matching term alone, the model may assign all 
the pixels in the image to the foreground region. 

The foreground-matching terms, CE$_1$ and DF$_1$, are closely related, with the former being and upper bound on the latter, due to Jensen's inequality: 
DF$_1$ $\leq$ CE$_1$. Both foreground-matching terms are monotonically decreasing functions of each softmax and reach their global minimum when all the softmax
predictions in the ground-truth foreground are equal to $1$ (i.e., reach their target). Hence, the matching terms in Dice and CE can be viewed as 
two different penalty functions for imposing the same equality constraints, $p_{i1} = 1, \forall i \in {\mathbf \Omega}_1$, thereby encouraging the predicted 
foreground to include the ground-truth foreground.

\section{The temperature scaling}
\label{appendix:temp}

\begin{figure}[h!]
    \centering
    \includegraphics[width=0.90\columnwidth]{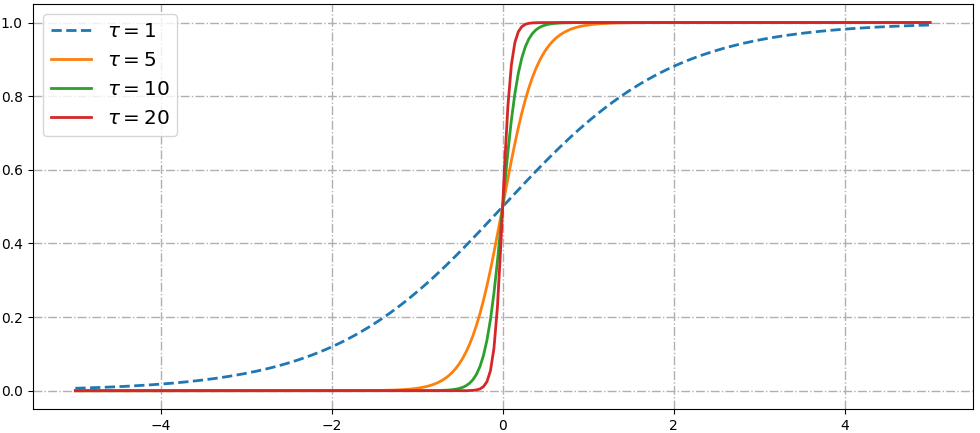}
    \caption{\textbf{Comparison of soft-max functions with different temperature scaling parameter $\tau$.} It is shown that the confidence of the output increases with larger $\tau$. Best seen in color.}
    \label{appendix:fig:temp}
\end{figure}

In our implementation, we employ a modified soft-max function with a temperature scaling parameter when computing the predicted region proportion $\hat{p}_k$ (also referred to as the predicted region-size probability, as in Table 1 of the main text) :
\begin{equation}
    s(\mathbf{z})_i = \frac{e^{\tau \cdot z_i}}{\sum_j^K e^{\tau \cdot z_j}}
\end{equation}
where $\mathbf{z} = (z_i)_{1\leq i \leq K}$ is the input vector of the soft-max function, and $\tau > 0$ acts as the temperature hyper-parameter.
High values of $\tau > 0$ yield high confidence of the soft-max prediction, as shown in Fig. \ref{appendix:fig:temp}: They push the soft-max vector towards the vertices of the simplex, with prediction values approaching either $0$ or $1$.  
As a result, this enables a better estimate of the actual region proportion (or relative size).
Note, we set $\tau$ to $10$ throughout all our experiments.

\rev{
\section{Experiments on natural images}

We further investigate our method on a natural image segmentation benchmark, i.e., Cityscapes\citep{Cordts2016Cityscapes}, in comparison with other baselines.
Cityscapes is a large-scale natural scene dataset with high quality pixel-level annotations of $5$k images across $19$ categories, containing both stuff and objects with high variation in the class proportions distribution.
We use the official data split, which contains $2,975$ samples for training, $500$ samples for validation and $1,512$ samples for testing.
All the input images are resized to $512\times 1024$ for training, and $1024\times 2048$ for testing.
The main setting is adopted from the state-of-the-art library\footnote{\url{https://github.com/open-mmlab/mmsegmentation}}.
Specifically, SGD optimizer is used for training, with the initial learning rate set to $0.01$ and a batch size of $8$.
During the $100$ training epochs, we use an iteration-wise polynomial strategy to linearly scale the learning rate down to a minimum of $1\text{e-}4$.

\newcolumntype{g}{>{\columncolor{Gray}}c}
\begin{table*}[htb]
    \caption{\textbf{Results on Cityscapes validation set.} (Top: \textit{Res50-FPN}, Bottom: \textit{Res101-FPN}) The second row indicates the average region proportion (mRegProp) for each class. mIoU denotes the mean IoU score over all classes.} %
    \vspace{-5mm}
	\begin{center}
	    \resizebox{1.0\textwidth}{!}
		{
		\setlength{\tabcolsep}{2pt}
		\begin{tabular}{@{} l  c c c c c c c c c c c c c c c c c c c  | c   @{}}
			\toprule
			& road & swalk & build. & wall & fence & pole & tlight & sign & veg. & terrain & sky & person & rider & car & truck & bus & train & mbike & bike & \textbf{mIoU}  \\
			\rowcolor{LightCyan}
			mRegProp(\%) & 32.6 & 5.4 & 20.2 & 0.6 & 0.8 & 1.1 & 0.2 & 0.5 & 14.1 & 1.0 & 3.6 & 1.1 & 0.1 & 6.2 & 0.2 & 0.2 & 0.2 & 0.1 & 0.4 &  \\
			\midrule
			CE & 97.2 & \textbf{80.6} & 90.0 & 38.2 & 51.3 & 55.7 & 60.5 & 72.2 & \textbf{91.0} & 58.6 & 92.3 & 75.6 & 48.5 & 92.5 & 50.5 & 67.7 & 56.3 & 50.2 & 71.6 & 68.5  \\
			FL & 97.1 & 78.8 & 89.5 & 37.9 & 48.9 & 51.8 & 56.9 & 67.6 & 90.6 & 56.9 & 92.6 & 73.1 & 42.8 & 92.0 & 47.5 & 57.5 & 46.1 & 49.2 & 68.4 & 65.5  \\
			Dice & 95.9 & 77.6 & 87.7 & \textbf{41.1} & 47.6 & 56.0 & \textbf{66.2} & \textbf{74.8} & 90.0 & 59.6 & 93.1 & \textbf{77.3} & \textbf{55.8} & 91.1 & 10.0 & 63.9 & 34.6 & 53.6 & \textbf{73.6} & 65.8 \\
		    LogDice & 94.0 & 70.8 & 85.6 & 33.2 & 39.1 & 50.6 & 62.2 & 69.6 & 88.4 & 52.1 & 88.1 & 74.2 & 50.7 & 89.2 & 34.8 & 55.3 & 31.7 & 48.0 & 70.7 & 62.5  \\
            DiceCE & 97.0 & 79.7 & 89.7 & 43.4 & 48.6 & 55.4 & 61.4 & 71.7 & 91.0 & 57.4 & 92.6 & 75.5 & 48.6 & 92.5 & 44.4 & 63.6 & 52.1 & 51.8 & 71.5 & 67.8 \\
			LogDiceCE & 96.9 & 78.6 & 89.8 & 40.4 & 48.3 & \textbf{57.1} & 65.4 & 74.6 & 90.7 & 57.2 & 92.5 & 77.0 & 52.7 & 92.4 & 47.9 & 65.0 & 50.6 & 51.4 & 72.6 & 68.5  \\
            \rowcolor{LightGray} RCE(${\cal D}_\text{KL}$) & 97.2 & 79.6 & 90.0 & 39.6 & 51.2 & 54.8 & 60.3 & 71.5 & \textbf{91.0} & \textbf{59.7} & 92.9 & 75.2 & 49.0 & 92.7 & \textbf{55.6} & \textbf{72.5} & \textbf{65.6} & 51.3 & 70.9 & 69.5 \\
            \rowcolor{LightGray} {\bf RCE(${\cal L}_1$)} & \textbf{97.4} & 80.2 & \textbf{90.1} & 37.9 & \textbf{51.9} & 55.7 & 61.0 & 72.1 & \textbf{91.0} & 58.9 & \textbf{93.5} & 75.7 & 49.8 & \textbf{92.8} & 54.9 & 70.2 & 62.8 & \textbf{54.0} & 71.6 & \textbf{69.6}  \\
			\midrule
			CE & \textbf{97.8} & 82.4 & 91.0 & 46.5 & 55.3 & 57.2 & 62.6 & 72.3 & 91.5 & 60.7 & \textbf{94.2} & 76.1 & 50.7 & 93.6 & 68.4 & 77.2 & 64.8 & 54.9 & 72.6 & 72.1  \\
			FL & 97.7 & 81.9 & 90.8 & \textbf{48.2} & 54.1 & 54.4 & 59.0 & 69.4 & 91.0 & 60.2 & 93.7 & 74.7 & 47.8 & 93.0 & 59.3 & 67.3 & 51.9 & 52.1 & 70.5 & 69.3  \\
			Dice & 96.4 & 79.6 & 88.1 & 0.0 & 50.6 & \textbf{58.4} & \textbf{69.0} & \textbf{76.0} & 90.2 & 60.4 & 93.7 & \textbf{78.7} & \textbf{60.0} & 91.4 & 35.1 & 0.0 & 26.8 & 0.0 & \textbf{74.5} & 59.4  \\
			LogDice & 95.2 & 73.9 & 86.4 & 31.5 & 39.2 & 52.4 & 63.3 & 70.4 & 88.6 & 53.0 & 92.2 & 74.6 & 52.1 & 89.4 & 37.1 & 57.3 & 31.1 & 42.7 & 71.5 & 63.3  \\
            DiceCE & 97.6 & 81.8 & 90.9 & 46.7 & 52.8 & 59.5 & 68.5 & 76.8 & 91.3 & 59.5 & 94.0 & 78.3 & 57.1 & 93.5 & 56.9 & 72.1 & 56.6 & 53.2 & 74.5 & 71.7 \\
			LogDiceCE & 97.2 & 79.9 & 90.2 & 42.5 & 52.2 & 57.6 & 66.7 & 74.8 & 90.9 & 59.6 & 93.7 & 77.7 & 57.4 & 92.9 & 56.4 & 72.1 & 58.9 & 54.2 & 74.0 & 71.0  \\
			\rowcolor{LightGray} RCE(${\cal D}_\text{KL}$)  & \textbf{97.8} & 82.6 & 91.1 & 45.4 & \textbf{56.8} & 57.4 & 63.4 & 72.5 & 91.5 & \textbf{61.3} & 94.0 & 76.7 & 52.4 & 93.7 & 69.2 & 78.3 & 64.5 & 57.1 & 72.8 & 72.6  \\
			\rowcolor{LightGray} {\bf RCE(${\cal L}_1$)} & \textbf{97.8} & \textbf{82.9} & \textbf{91.2} & 48.0 & \textbf{56.8} & 57.7 & 63.8 & 72.7 & \textbf{91.6} & 61.2 & 93.8 & 76.9 & 52.8 & \textbf{93.8} & \textbf{77.1} & \textbf{80.0} & \textbf{67.1} & \textbf{57.2} & 73.1 & \textbf{73.4}  \\
			\bottomrule
		\end{tabular}
		}
	\end{center}
	
	\label{tab:cityscapesresult}
\end{table*}

Table \ref{tab:cityscapesresult} reports the comparative per-class IoU and mean IoU (mIoU) on the validation set of Cityscapes with two network architectures. 
First, we can observe that in this multi-class dataset, regardless of the network, the proposed learning objectives outperform all the evaluated losses in terms of mIoU.
Then, by investigating the relationship between region proportion, mRegProp (second row in Table \ref{tab:cityscapesresult}), and the corresponding segmentation performance across small region classes, we can observe that Dice-related losses have a hidden label-marginal bias towards extremely imbalanced solutions, preferring small structures. In particular, the linear Dice often obtains the highest IoU for the smallest structures. 
This bias comes at the cost of less flexibility when dealing with arbitrary class proportions, which is reflected in its poor mIoU (\textit{right column}). 
Quantitative evaluation on the Cityscapes test set is reported in Table \ref{tab:cityscapes-test}. The observations in this table are consistent with those on the validation set, that our method achieves better results on both architectures.

\begin{table}[t]
\begin{center}
\scriptsize
\caption{\textbf{mIoU on Cityscapes test set.} }
\label{tab:cityscapes-test}
{
\begin{tabular}{@{}lcc@{}}\toprule

Loss & Res50-FPN & Res101-FPN \\
\midrule
CE & 67.0 &  69.6  \\ 
FL & 64.5 & 66.8  \\
Dice & 63.7 & 59.4  \\
LogDice & 59.8 & 62.6  \\
DiceCE  & 66.6 &  69.7  \\
LogDiceCE  & 67.1 &  69.6  \\
\midrule
RCE(${\cal D}_\text{KL}$) & \textbf{68.4} &  \textbf{70.1}   \\
{\bf RCE(${\cal L}_1$)} &  \textbf{68.4} & \textbf{70.1}  \\
\bottomrule
\end{tabular}
}
\end{center}
\end{table}

}

\rev{
\section{Performances comparision on AMOS benchmark}

\begin{table*}[htb]
    \caption{\rev{\textbf{Per-class and mean performance comparison on AMOS benchmark.} We report DSC score for each model and mDSC denotes the mean DSC score across all classes.}}
    \label{tab:AMOS-class}
    \begin{center}
    \vspace{-5mm}
    \resizebox{1.0\textwidth}{!}{
    \begin{tabular}{@{}l*{15}{c}|c@{}}
    \toprule
    & \rev{spleen} & \rev{r.kidney} & \rev{l.kidn.} & \rev{gall.blad.} & \rev{esoph.} & \rev{liver} & \rev{stomach} & \rev{arota} & \rev{postcava} & \rev{panc.} & \rev{r.adre.} & \rev{l.adre.} & \rev{duod.} & \rev{bladder} & \rev{prostate} & \rev{mDSC} \\
    \midrule
    \rev{CE} & \rev{96.6} & \rev{\textbf{96.1}} & \rev{\textbf{96.5}} & \rev{77.0} & \rev{77.6} & \rev{\textbf{97.5}} & \rev{90.2} & \rev{\textbf{94.6}} & \rev{89.4} & \rev{84.6} & \rev{63.3} & \rev{62.8} & \rev{77.1} & \rev{85.8} & \rev{81.2} & \rev{84.7} \\
    \rev{FL} & \rev{96.5} & \rev{95.2} & \rev{96.3} & \rev{75.8} & \rev{75.8} & \rev{97.3} & \rev{88.9} & \rev{94.4} & \rev{88.8} & \rev{83.3} & \rev{60.9} & \rev{60.5} & \rev{75.8} & \rev{85.2} & \rev{80.6} & \rev{83.7} \\
    \rev{Dice} & \rev{96.3} & \rev{96.0} & \rev{96.0} & \rev{79.7} & \rev{0} & \rev{97.1} & \rev{90.6} & \rev{94.3} & \rev{90.5} & \rev{\textbf{86.0}} & \rev{0} & \rev{0} & \rev{\textbf{80.5}} & \rev{0} & \rev{0} & \rev{60.5} \\
    \rev{LogDice} & \rev{0} & \rev{94.5} & \rev{95.4} & \rev{0} & \rev{0} & \rev{95.3} & \rev{85.7} & \rev{93.2} & \rev{88.2} & \rev{0} & \rev{0} & \rev{0} & \rev{74.4} & \rev{0} & \rev{0} & \rev{41.8} \\
    \rev{DiceCE} & \rev{96.5} & \rev{96.0} & \rev{96.1} & \rev{77.4} & \rev{78.1} & \rev{97.4} & \rev{90.0} & \rev{94.5} & \rev{89.5} & \rev{84.2} & \rev{62.9} & \rev{63.8} & \rev{77.3} & \rev{85.1} & \rev{80.7} & \rev{84.6} \\
    \rev{LogDiceCE} & \rev{96.1} & \rev{\textbf{96.1}} & \rev{96.2} & \rev{79.6} & \rev{83.0} & \rev{97.3} & \rev{90.2} & \rev{94.4} & \rev{90.4} & \rev{85.3} & \rev{74.1} & \rev{75.1} & \rev{79.4} & \rev{84.0} & \rev{\textbf{81.5}} & \rev{86.9} \\
    \midrule
    \rev{\textbf{RCE (Ours)}} & \rev{\textbf{96.7}} & \rev{\textbf{96.1}} & \rev{96.3} & \rev{\textbf{79.9}} & \rev{\textbf{83.3}} & \rev{\textbf{97.5}} & \rev{\textbf{90.7}} & \rev{\textbf{94.6}} & \rev{\textbf{90.8}} & \rev{85.7} & \rev{\textbf{75.1}} & \rev{\textbf{75.9}} & \rev{79.6} & \rev{\textbf{86.2}} & \rev{80.8} & \rev{\textbf{87.3}} \\
    \bottomrule
    \end{tabular}
    }
    \end{center}
\end{table*}

Table~\ref{tab:AMOS-class} presents the comparison of per-class performances for different methods.
Across the 15 categories, our method outperforms the related works on 11 classes, and achieve the best performance in term of mean DSC.
It is noteworthy that singular Dice-related losses, i.e., Dice and LogDice, achieve the best performances on some classes (86.0 for pancreas and 80.5 for duodenum), but  completely failed to handle some classes, like esophagus, bladder and prostate.
This finding provides further support for the theoretical insights presented in our paper, namely that while Dice loss encourages segmentation of classes with small region areas, it might not be as effective in adapting to different classes or instances with diverse region proportions.
Another possible reason could be the aggressive gradient nature of Dice loss at the vicinity, which can lead to instability in the optimization process.
}

\rev{
\section{Discussion regarding computation complexity and training converge}

\begin{figure}[htb]
    \centering
    \includegraphics[width=0.9\columnwidth]{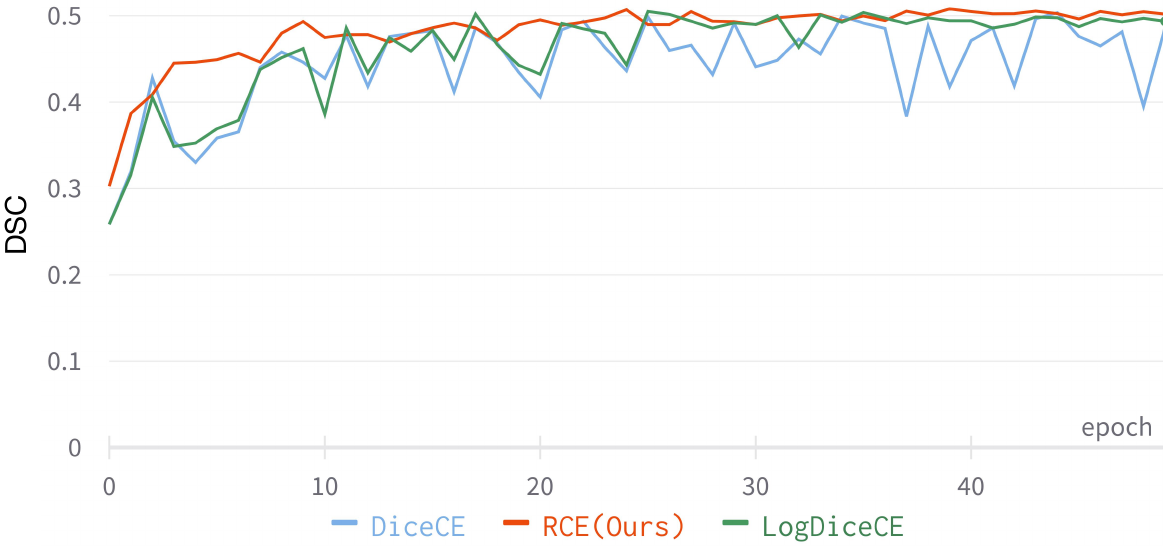}
    \caption{\rev{Comparison of training evolution for different losses. We present the evolution of DSC scores on validation set of Retinal lesions during training.}}
    \label{appendix:fig:train}
\end{figure}

Comparing to the other related two-term compounding losses like DiceCE and LogDiceCE, the proposed method share similar computation complexity.
Thus, the influence on the training speed is marginal.
As the imposed penalty term encourages to predict right region proportion, which is reinforcing the hidden bias inside CE, it does not affect the convergence of the training process.
Empirically, we investigate the training progress of our method by presenting the DSC scores on validation set of Retinal Lesions over the training epoches, in comparison with DiceCE and LogDiceCE, as shown in Fig.~\ref{appendix:fig:train}.
It is shown that our method deliver more stable evolution during the training process, which might be due to its better gradient dynamics. 
}

\end{document}